\newtheorem{definition}{Definition}
\newtheorem{theorem}{Theorem}
\newtheorem{lemma}{Lemma}
\newcommand{\BB}{\mathbb{B}}
\newcommand{\EE}{\mathbb{E}}
\newcommand{\RR}{\mathbb{R}}
\newcommand{\cE}{\mathcal{E}}
\newcommand{\cX}{\mathcal{X}}
\newcommand{\cY}{\mathcal{Y}}
\newcommand{\cO}{\mathcal{O}}
\newcommand{\cQ}{\mathcal{Q}}
\newcommand{\sigmamd}{\sigma_{\text{md}}}
\newcommand{\sigmadmd}{ \sigma^d_{\text{md}} }
\newcommand{\mean}{{\bar{x}}}
\newcommand{\seed}{s}
\newcommand{\smooth}{H}
\newcommand{\est}{\hat{x}}
\newcommand{\monotone}{\emph{$k$-interval} }
\newcommand{\diam}{D}
\newcommand{\sg}{{g}}
\newcommand{\zs}[1]{\textcolor{black}{#1}}
\renewcommand{\th}[1]{\textcolor{black}{#1}}
\newcommand{\cq}{correlated quantization }
\newcommand{\Cq}{Correlated quantization }
\newcommand{\ed}{\stackrel{\mathrm{def}}{=}}
\title{Correlated quantization for distributed mean estimation and optimization}
\author{Ananda Theertha Suresh \and Ziteng Sun \and Jae Hun Ro \and Felix Yu}
\date{
\texttt{\{theertha, zitengsun, jaero, felixyu\}@google.com} \\[2ex]
Google Research, New York}
\begin{document}

\maketitle

\begin{abstract}
We study the problem of distributed mean estimation and optimization 
under communication constraints. We propose a \cq protocol whose 
leading term in the error guarantee depends on the mean deviation of 
data points rather than only their absolute range. The design doesn't need 
any prior knowledge on the concentration property of the dataset, which is 
required to get such dependence in previous works.  We show that applying 
the proposed protocol as a sub-routine in distributed optimization 
algorithms leads to better convergence rates.  We also prove the optimality 
of our protocol under mild assumptions. Experimental results show that 
our proposed algorithm outperforms existing mean estimation protocols on 
a diverse set of tasks.
\end{abstract}

\section{Introduction}

Large-scale machine learning systems often require the distribution of data across multiple devices.
For example, in federated learning \citep{kairouz2019advances}, data is distributed across user devices such as cell phones, and the machine learning models are trained with adaptive stochastic gradient descent methods or their variations like federated averaging \citep{mcmahan2017communication}. Such algorithms require multiple rounds of communication between the devices and the centralized server. At each round, the devices send model updates to the server, and the server aggregates the data and outputs a new model.

In many scenarios like federated learning, the data from devices are sent to the server over wireless channels.  Communication between devices and the server, especially the uplink communication, is a bottleneck. 
This has resulted in a series of works on compression and quantization methods to reduce the communication cost \citep{konevcny2016federated, lin2017deep, alistarh2017qsgd}.

At the heart of these algorithms is the distributed mean estimation protocol 
where each client has a model update (in the form of a vector). Each client 
compresses its update and transmits the compressed version to the server. 
The server then decompresses and aggregates the updates to approximate 
the mean of the updates.  In this work, we study the problem of distributed 
mean estimation and provide the first algorithm whose leading term of the 
error depends on the mean deviation of the inputs rather than only their 
absolute 
range without additional side information. We then use these results to 
provide improved convergence guarantees for distributed optimization 
protocols.
Before we proceed further, we need a few definitions. 

\paragraph{Distributed mean estimation.} Let $\cX \subset \mathbb{R}^d$ be the input space and  $x^n = x_1, x_2, \ldots, x_n$ be the $n$ data points where each $x_i \in \cX$. For most results, we assume $\cX = \mathbb{B}^d(R)  \ed \{ x \in \mathbb{R}^d \mid \| x\|_2 \le R \}$, the $\ell_2$ ball of radius $R$. We denote the mean of the vectors as
\[
\mean = \frac{1}{n} \sum^n_{i=1} x_i.
\]
In compression, these $x_i$s are encoded at the clients and then decoded at the server \citep{mayekar2020ratq}. $Q_i$, the quantizer (encoder) at client $i$ is a (possibly randomized) mapping from $\cX \to \cY$, where $\cY$ is the quantized space. With the slight abuse of notation let $Q^n = Q_1, Q_2, \ldots Q_n$, be the set of quantizers.  Each client encodes $x_i$ and sends $Q_i(x_i)$ to the server. The server then decodes $Q^n(x^n)$ to get an estimate of the mean $\est$.
Following earlier works \citep{suresh2017distributed, mayekar2020ratq}, we measure the performance of the quantizer in terms of mean squared error,
\[
\cE(Q^n, x^n) \ed \EE  \| \est - \mean \|_2 ^2,
\]
where the expectation is over the public and private randomness of the algorithm.

\paragraph{Distributed optimization.} We consider solving the following optimization task using distributed stochastic gradient descent (SGD) methods. Let $F: \mathbb{R}^d \rightarrow \mathbb{R}$ be an objective function and the goal is to minimize the $F$ over an $\ell_2$-bounded space $\Omega \ed \{w \in \mathbb{R}^d \mid \|w\|_2 \le \diam \}.$ Motivated by the federated learning setting, we assume there are $T$ rounds of communication. At round $t$, a set of $n$ clients is involved, and each of them has access to a stochastic gradient oracle of $F$, denoted by $\sg$. 

In distributed SGD, after selecting a random initialization $w_0 \in \Omega$, at round $t$, each client queries the oracle at $w_t$. Under communication constraints, users must quantize their obtained gradient with limited bits and send it to the server. The server then uses these quantized messages to estimate the true gradient of $F(w)$, which is the average of local gradients. We denote this estimate as $\widehat{\nabla}F(w_t)$. The server then updates the parameter with some learning rate
\[
    w_{t+1} = w_t - \eta_t \widehat{\nabla}F(w_t),
\]
and projects it back to $\Omega$, which is sent to all clients in the next round.

\zs{Under smoothness assumptions on $F$,} standard results in optimization, e.g., \citet{bubeck2015convex}, allow us to obtain convergence results given mean squared error guarantees for the mean estimation primitive $ \EE [ (\widehat{\nabla}F(w_t) -\nabla F(w_t))^2 ]$. Hence we will focus on analyzing error guarantees on the mean estimation task and discuss their implications on distributed optimization.

\section{Related Works}

The goal of distributed mean estimation is to estimate the empirical mean without making distributional assumptions of the data. This is different from works estimating the mean of the underlying distributional model \citep{zhang2013information, garg2014communication, braverman2016communication, cai2020distributed, acharya2021distributed}. To achieve guarantees in terms of the deviation of the data, these techniques rely on the distributional assumption, which is not applicable in our setting.

The classic algorithm for this problem is stochastic scalar quantization, where each dimension of the data is stochastically quantized to one of the fixed values (such as  0 or 1 in stochastic binary quantization). This provides an unbiased estimation with reduced communication cost. It has been shown that adding random rotation reduces quantization error \citep{suresh2017distributed} and variable-length coding provides the near-optimal communication-error trade-off \citep{alistarh2017qsgd, suresh2017distributed}.
Many variants and improvements of the scalar quantization algorithms exist. For example, Terngrad \citep{wen2017terngrad}  and 3LC \citep {lim20193lc} use a three-level stochastic quantization strategy. SignSGD uses the sign of the coordinate of gradients rather than quantizing it \citep{bernstein2018signsgd}. 1-bit SGD uses error-feedback as a mechanism to reduce the error in quantization \citep{seide20141}; error-feedback \citep{stich2020error} is orthogonal to our work and can be potentially used in combination. \citet{mitchell2022optimizing} proposes to learn the quantizer leveraging data distribution across the clients using rate-distortion theory. \citet{vargaftik2021drive} proposes DRIVE, an improvement of the random rotation method by replacing the stochastic binary quantization with the sign operator.  This method is shown to outperform other variants of scalar quantization. Recent work of \citet{Vargaftik2022eden} generalizes DRIVE to handle any communication budget. \th{Other techniques include Sparsified SGD \citep{stich2018sparsified} and sketching based approaches \citep{ivkin2019communication}.}

Beyond scalar quantization,  vector quantization may lead to higher worst-case communication cost \citep{gandikota2021vqsgd}.  Kashin's representation has been used to quantize a $d$-dimensional vector using less than $d$ bits \citep{caldas2018expanding, chen2020breaking, safaryan2020uncertainty}.  \citet{davies2020distributed} use the lattice quantization method which will be discussed below. 

More broadly, our work is also related to non-quantization methods to improve the communication cost of distributed mean estimation, often under the context of distributed optimization. Examples include gradient sparsification \citep{aji2017sparse, lin2017deep, basu2019qsparse} and low rank decomposition \citep{wang2018atomo, vogels2019powersgd}. These methods require  assumptions of the data such as high sparsity or low rank. \zs{The idea of using correlation between local compressors has also been considered in \citet{szlendak2021permutation} for gradient sparsification. The paper uses shared randomness to select coordinates without-replacement across clients, which is also shown to be advantageous to independent masking. Without compression, \cite{yun2022minibatch} studies the effect of randomness in local data reshuffling. The paper shows that for both local and mini-batch SGD, sampling indices without-replacement can outperform sampling with-replacement counterparts in certain regimes.}

Perhaps closest to our work is that of \citet{davies2020distributed, mayekar2021wyner}, who proposed algorithms with error that depends on the variance of the inputs. However, these works all need certain side information about the inputs, and deviate from our work in two ways: first in \citet{davies2020distributed}, the clients need to know the input variance. Secondly, both \citet{davies2020distributed, mayekar2021wyner} require the server to know one of the client values to a high accuracy. Finally, their information theoretically optimal algorithm is not computationally efficient, and their efficient algorithm is sub-optimal in logarithmic factors. \th{We note that correlated random variables are also used in  \citet{mayekar2021wyner}. They use the same random variable to quantize both the input vector and the side information. This is very different from our proposed approach,
which generates correlated random variables 
which are used for different input vectors.}
\section{Our Contributions}

We propose  \cq  that only requires a simple modification in the standard stochastic quantization algorithm. \Cq  uses shared randomness to introduce correlation between local quantizers at each client, which results in improved error bounds. In the absence of shared randomness, it can be simulated by the server sending a seed to generate randomness to all the clients. We first state the error guarantees below.

In one dimension, if all values lie in the range $[l, r]$, the error of standard stochastic quantization with $k$ levels scales as \citep{suresh2017distributed}
\begin{equation*}
\cO \left( \frac{(r - l)^2}{nk^2} \right).
\end{equation*}
In this work, we show that the modified algorithm (Algorithm~\ref{fig:one_d_k_bits}) has error that scales as
\begin{equation*}
\cO \left( \min \left(\frac{\sigmamd (r - l)}{nk},\frac{(r - l)^2}{nk^2} \right) + \frac{(r - l)^2}{n^2k^2} \right),
\end{equation*}
where $\sigmamd$ is the empirical mean absolute deviation of points defined below:
\begin{equation}
    \label{eq:sigmamd}
    \sigmamd \ed \frac{1}{n} \sum_{i = 1}^n \left| x_i - \mean \right|.
\end{equation}
Informally, $\sigmamd$ models how concentrated the data points are. Compared to other commonly used concentration measures, such as worst-case deviation  \citep{mayekar2021wyner}  
\[
\sigma_{\text{max}} \ed \max_i |x_i - \mean|,
\]
and standard deviation \citep{davies2020distributed}
\[
\sigma \ed \sqrt{\frac{1}{n} \sum_{i = 1}^n \left( x_i - \mean \right)^2},
\]
it holds that $\sigmamd \le \sigma \le \sigma_{\text{max}} \leq r - l$. Hence our result implies bounds in terms of these concentration measures as well. 

When $\sigmamd < \frac{r - l}{k}$, i.e., the data points are ``close" to each other, the proposed algorithm has smaller error than stochastic quantization. 
Notice that it was shown in previous works that better error guarantees can be obtained when the data points have better concentration properties. However these works rely on knowing a bound on the concentration radius \citep{davies2020distributed} or side information such as a crude bound on the location of the mean \citep{davies2020distributed, mayekar2021wyner}. 
Different from the above, our proposed scheme doesn't require any side information.
Moreover, we remark that our proposed scheme only requires a simple modification of how randomness is generated in the standard stochastic quantization algorithm while these algorithms are based on sophisticated encoding schemes based on the availability of prior information.

\paragraph{Lower bound.} \th{When $\sigmamd$ is small,} we further show that in the one-dimensional case, our obtained bound is optimal for any \monotone quantizers (see Definition~\ref{def:monotone}), which is commonly used in many state-of-the art compression algorithms in distributed optimization including stochastic quantization and our proposed algorithm. Moreover, when each client is only allowed to use one bit (or constant bits), the obtained bound is information-theoretically optimal for any quantizers.

\paragraph{Extension to higher dimensions.} In high dimensions, if all values lie in $\mathbb{B}^d(R)$, the error of the min-max optimal algorithms with $k$ levels of quantization scales as \citep{suresh2017distributed}
\begin{equation*}
     \tilde{\cO} \left( \frac{R^2}{nk^2} \right).
\end{equation*}
We show that an improvement similar to the previous result: 
\begin{equation*}
  \tilde{\cO} \left( \min \left(\frac{\sigmadmd R}{nk},\frac{R^2}{nk^2} \right) + \frac{R^2}{n^2k^2} \right), 
\end{equation*}
where $\sigmadmd$ is the average $\ell_2$ distance between data points, 
\begin{equation}
    \label{eq:sigmadmd}
\sigmadmd = \frac{1}{n} \sum^n_{i=1} \|x_{i} - \mean\|_2.
\end{equation}
Similar to the one-dimensional case, it can be shown that $\sigmadmd$ is upper bounded by $\max_i \| x_i - \mean\|_2$ and $(\frac1n\sum_{i = 1}^n \|x_i - \mean\|_2^2)^{1/2}$. And hence the same bound holds for these measures as well.

We note that our dependence on standard deviation is linear. This coincides with the results of \citet{mayekar2021wyner} for mean estimation with side information. While the results of one are not directly applicable to the other, understanding the two results in the high-dimensional case remains an interesting direction to explore.

\paragraph{Distributed optimization.}
Turning to the task of distributed optimization, the proposed mean estimation algorithm can be used as a primitive in distributed SGD algorithm. Following stochastic optimization literature, we assume the gradient oracle $g$ satisfies $\forall w \in \Omega$,
\begin{itemize}
    \item Unbiasedness: $\EE[\sg(w)] = \nabla F(w)$. 
    \item Lipschitzness: $\|\sg(w)\|_2 \le R$.
    \item Bounded variance: $\EE[\|\sg(w) - \nabla F(w))\|_2^2] \le \sigma^2$.
\end{itemize}
We also assume that the function is convex and smooth. A function $F$ is $H$-smooth, if for all $w, w' \in \Omega$,
\[
\| \nabla F(w) - \nabla F(w') \|_2 \leq H \|w - w'\|_2.
\]
Using standard results on smooth convex optimization (e.g., Theorem 6.3 in \citet{bubeck2015convex}) and estimation error guarantee in Corollary~\ref{cor:entropycq}, we obtain the following bound. 
\newcommand{\err}{\triangle}
\begin{theorem}
    Suppose the objective function $F(w)$ is convex and $\smooth$-smooth,
    using \cq with $k = \sqrt{d}$ levels as the mean estimation primitive, distributed SGD with $T$ rounds and learning rate $\frac{1}{H + 1/\eta}$ yields
        \[
       \EE \left[ F\left(\frac{1}{T} \sum_{t = 1}^T w_t\right) \right] - \min_{w \in \Omega} F(w) \le \left(\smooth + \frac1\eta \right) \frac{ D^2}{T} + \frac{\eta \err^2}{2},
    \]
    where $\eta > 0$ and 
    $\err^2    = \cO \left( 
\frac{\sigma R}{n} + \frac{R^2}{n^2} \right)$\footnote{Note that here we ignore the statistical variance term $\frac{\sigma^2}{n}$ since it is always smaller than the quantization error. The same applies for the classic stochastic rounding algorithm.}.
    Moreover, each client only needs to send $\Theta(d)$ bits (constant bits per dimension) in each round. 
\end{theorem}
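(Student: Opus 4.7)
The plan is to derive this as an immediate consequence of the standard convergence guarantee for smooth convex stochastic gradient descent (Theorem 6.3 of \citet{bubeck2015convex}), combined with the mean estimation error bound of Corollary~\ref{cor:entropycq}. The Bubeck template says: whenever $F$ is convex and $\smooth$-smooth on $\Omega$, and at every round the server's surrogate gradient $\widehat{\nabla}F(w_t)$ is an unbiased estimator of $\nabla F(w_t)$ with $\EE\|\widehat{\nabla}F(w_t) - \nabla F(w_t)\|_2^2 \le \err^2$, then projected SGD with step size $1/(\smooth + 1/\eta)$ on the averaged iterate yields exactly the bound $(\smooth + 1/\eta)\diam^2/T + \eta \err^2/2$. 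Hence the entire proof reduces to controlling $\err^2$ at a single round.

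I would decompose the per-round gradient error as
\[
\widehat{\nabla}F(w_t) - \nabla F(w_t) \;=\; \bigl(\widehat{\nabla}F(w_t) - \tfrac{1}{n}\textstyle\sum_i \sg_i(w_t)\bigr) \;+\; \bigl(\tfrac{1}{n}\textstyle\sum_i \sg_i(w_t) - \nabla F(w_t)\bigr).
\]
The second piece is the usual mini-batch stochastic noise and contributes at most $\sigma^2/n$ by the bounded-variance assumption together with independence of the $\sg_i(w_t)$ across clients. The first piece is exactly the \cq mean estimation error applied to the $n$ local gradients, each of which lies in $\BB^d(R)$ by the Lipschitz assumption. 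Invoking Corollary~\ref{cor:entropycq} with $k = \sqrt{d}$ bounds this piece, conditioned on the oracle randomness, by $\cO\bigl(\sigmadmd R/n + R^2/n^2\bigr)$ (up to the log factors absorbed by the corollary), and the $\Theta(d)$ communication claim is inherited directly from that corollary.

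To turn $\sigmadmd$ into the dimension-free $\sigma$, I would take expectation over the oracle draw and apply Jensen's inequality:
\[
\EE[\sigmadmd] \;\le\; \EE\bigl[\bigl(\tfrac{1}{n}\textstyle\sum_i \|\sg_i(w_t) - \mean\|_2^2\bigr)^{1/2}\bigr] \;\le\; \EE\bigl[\bigl(\tfrac{1}{n}\textstyle\sum_i \|\sg_i(w_t) - \nabla F(w_t)\|_2^2\bigr)^{1/2}\bigr] \;\le\; \sigma,
\]
using Cauchy--Schwarz in the first step and the fact that the empirical mean minimizes the sum of squared deviations in the second. Since $\sigma \le R$ by the Lipschitz bound, the $\sigma^2/n$ statistical term is dominated by the quantization term $\sigma R/n$, yielding the stated $\err^2 = \cO(\sigma R/n + R^2/n^2)$. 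Plugging this into the Bubeck bound completes the argument.

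The main obstacle I anticipate is the coupling between the oracle randomness (which determines $\sigmadmd$) and the shared randomness used inside \cq: the error guarantee of Corollary~\ref{cor:entropycq} must be read as a bound that holds for every realization of the $\sg_i(w_t)$, so that I may legitimately take the outer expectation over oracle draws and push it through to $\sigmadmd$. Once this conditioning step is justified, everything else is a routine plug-in, including verifying that the entropy-coded version of \cq indeed achieves $\Theta(d)$ total bits per client despite using $\sqrt{d}$ reconstruction levels per coordinate.
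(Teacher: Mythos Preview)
Your proposal is correct and follows exactly the route the paper indicates: the paper itself gives no detailed proof, only the sentence ``Using standard results on smooth convex optimization (e.g., Theorem 6.3 in \citet{bubeck2015convex}) and estimation error guarantee in Corollary~\ref{cor:entropycq}, we obtain the following bound,'' and your decomposition into the Bubeck template plus the per-round quantization/statistical error split is the natural way to fill in those details. One small remark: the \textsc{EntropyCQ} bound of Corollary~\ref{cor:entropycq} carries no logarithmic factors in the error (only \textsc{WalshHadamardCQ} does), so your parenthetical ``up to the log factors absorbed by the corollary'' is unnecessary.
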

Optimization algorithms based on stochastic rounding \citep{alistarh2017qsgd, suresh2017distributed} has convergence rate of the same form with $\err'^2 = \cO \left(
\frac{R^2}{n} \right)$ under the same communication complexity.
Notice that the mean squared error (and therefore the convergence rate) we obtain is always better than that of the classic stochastic quantization algorithm, and when $\sigma < R$ the new algorithm gives faster convergence. 
This corresponds to the case where the clients' local gradients are better concentrated than their absolute range, which is a mild assumption.

\section{A Toy Example}
\begin{figure}
\begin{center}
\includegraphics[scale=0.4]{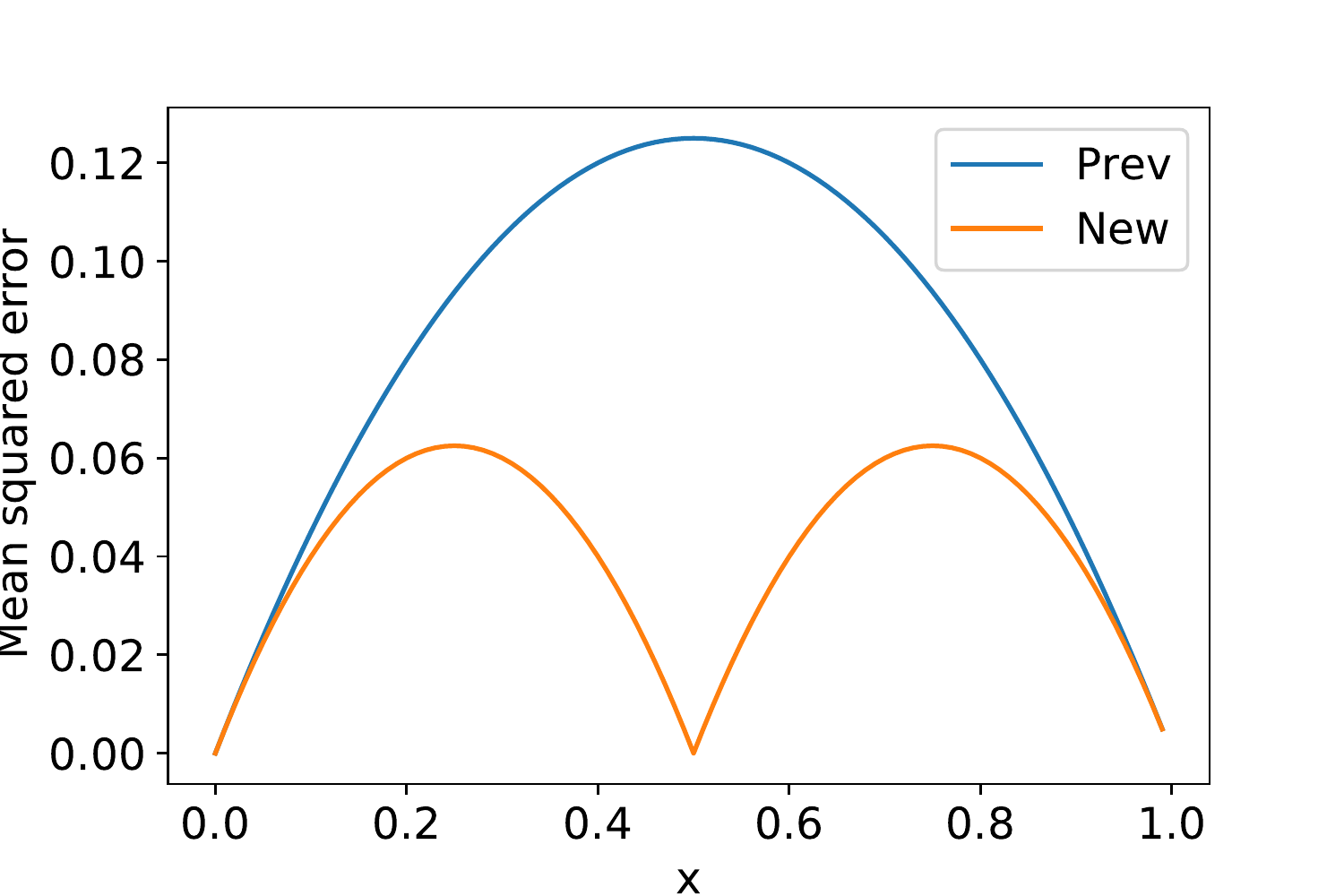}
\vspace{-10pt}
\end{center}
\caption{Mean squared error for the toy problem.}
\vspace{-10pt}
\label{fig:toy}
\end{figure}
Before we proceed further, we motivate our result with a simple example. 
Suppose there are $n$ devices and device $i$ has $x_i \in [0, 1]$.  
Further, let's consider the simple scenario in which each client can only send one bit i.e., $Q_i$ can take only two values. The popular algorithm for mean estimation is stochastic quantization, in which client $i$ sends $Q_i(x_i)$ given by
\begin{equation}
    \label{eq:sround}
Q_i(x_i) 
= \begin{cases}
1 \text{ with probability } x_i, \\
0 \text{ with probability } 1 - x_i.
\end{cases}
\end{equation}
Notice that $Q_i(x_i)$ takes only two values and the server computes the mean by computing their average
\[
\est = \frac{1}{n} \sum^n_{i=1} Q_i(x_i).
\]
We first note that $Q_i(x_i)$ is unbiased
\[
\EE[\est] = \mean.
\]
The mean squared error of this protocol is
\[
\cE(Q^n, x^n) = \frac{1}{n^2} \sum^n_{i=1} x_i (1-x_i).
\]
To motivate our approach, we consider the special case when $n=2$ and further assume $x_1 = x_2 = x$. In this case, the mean squared error of the stochastic quantizer is 
\begin{equation}
\label{eq:srmse}
\cE(Q^2, x^2) = \frac{x(1-x)}{2}.
\end{equation}
The key insight of \cq is that if the first client rounds up its value,  the second client should round down its value to reduce the mean squared error. To see this, we first rewrite the original stochastic quantization $Q_i$ slightly differently. For each $i$, let $U_i$ be an independent uniform random variable in the range $[0, 1]$. Then we can rewrite $Q_i$ as
\begin{align*}
Q_i(x_i) &= 1_{U_i < x_i}.
\end{align*}
To see the equivalence of the above definition and~\eqref{eq:sround}, observe that the probability of $U_i < x_i$ is $x_i$. Hence $1_{U_i < x_i}$ is one with probability $x_i$ and zero otherwise.

For the special case of $n=2$, we propose to modify the estimator by making $U_i$s dependent on each other. In particular, let $U_1$ be a uniform random variable in the range $[0, 1]$ and let $U_2 = 1 - U_1$. This can be implemented using shared randomness. Let the modified estimator be as before
\begin{align*}
Q'_i(x_i) &= 1_{U_i < x_i}.
\end{align*}
Since $Q_i$ is an unbiased estimator, $Q'_i$ is also an unbiased estimator. We now compute the mean squared error of $Q'$
\begin{align*}
  \cE(Q'^2, x^2)  
   &= \EE \left[ \left( \frac{Q'_1(x_1) + Q'_2(x_2)}{2} \right)^2 \right] - x^2 \\
    & = \frac{1}{4} \EE \left[ \left( 1_{U_1 < x} +  1_{1 -U_1 < x} \right)^2 \right] - x^2 \\
    & = \frac{1}{4} \EE \left[ 1_{U_1 < x} +  1_{1 -U_1 < x} + 2 \cdot 1_{U_1 < x} 1_{1-U_1 < x}  \right] - x^2 \\
  &= \frac{1}{4} \EE \left[ 1_{U_1 < x} +  1_{1 -U_1 < x} + 2 \cdot 1_{1 - x < U_1 < x} \right] - x^2 \\
    & = \frac{1}{4} \left( x + x + 2\max(2x-1, 0)  \right) - x^2 \\
    & = \frac{x}{2} + \max \left(x-\frac{1}{2}, 0 \right) - x^2.
\end{align*}
We plot the mean squared error of the original quantizer $Q$ and the new quantizer $Q'$ in Figure~\ref{fig:toy}.
Observe that the above mean squared error is uniformly upper bounded by the mean squared error of the original quantizer~\eqref{eq:srmse}.   Our goal is to propose a similar estimator for $n$ devices, in $d$ dimensions, that has uniformly low error compared to $Q$ even when the devices have different values of $x_i$. 

\section{Correlated Quantization in One Dimension}

We first extend the above idea and provide a new quantizer for bounded scalar values. For simplicity, we first assume each $\cX = [0,1]$. Recall that the goal is to estimate 
$
\frac{1}{n} \sum^n_{i=1} x_i.
$
Let $Q_i(x_i)$ be some quantization of $x_i$. We approximate the average by 
$
\frac{1}{n} \sum^n_{i=1} Q_i(x_i).
$
As we discussed in the previous section, we propose to use 
\begin{align*}
Q_i(x_i) &= 1_{U_i < x_i},
\end{align*}
where $U_i$s are uniform random variables between zero and one, but are now correlated across the clients. We generate $U_i$s as follows.  Let $\pi$ denote a random permutation of $\{0, 1, 2,\ldots, n-1\}$. Let $\pi_i$ denote the $i^\text{th}$ value of this permutation. Let $\gamma_i$ is a uniform random variable between $[0, 1/n)$. With these definitions, we let
\[
U_i = \frac{\pi_i}{n} + \gamma_i.
\]
Observe that for each $i$, $U_i$ is a uniform random variable over  $[0, 1]$. However, they are now correlated across clients.
Hence the quantizer can be written as
\[
 Q_i(x_i) = 1_{\frac{\pi_i}{n} + \gamma_i < x_i}.
 \]
We illustrate why this quantizer is better with an example. If all clients have the same value $s/n, s \in \{0, 1, \ldots, n\}$, the new quantizer can be written as
\begin{align*}
    \frac{1}{n} \sum^n_{i=1}  Q_i(s/n) 
   &  =  \frac{1}{n} \sum^n_{i=1}  1_{\frac{\pi_i}{n} + \gamma_i < \frac{s}{n}} 
     \stackrel{(a)}{=}  \frac{1}{n} \sum^n_{i=1}  1_{\frac{\pi_i}{n} < \frac{s}{n}} \\
     & \stackrel{(b)}{=}  \frac{1}{n} \sum^n_{j=1}  1_{\frac{j-1}{n} < \frac{s}{n}} 
    =  \frac{1}{n} \sum^n_{j=1}  1_{j \leq s} 
     = \frac{s}{n},
\end{align*}
where $(a)$ uses the fact that  the value of $\gamma_i$ does not change $ 1_{\frac{\pi_i}{n} + \gamma_i < \frac{s}{n}} $ for this example and $(b)$ uses the fact that $\pi$ is a random permutation of $\{0, 1, 2,\ldots, n-1\}$. We have shown that the correlated quantizer has zero error in the above example. 
In contrast, the standard stochastic quantizer outputs
\[
\frac{1}{n} \sum^n_{i=1}  1_{U_i < \frac{s}{n}} \stackrel{(c)}{=} \frac{1}{n} \sum^n_{i=1}  1_{\frac{\lfloor U_i  n \rfloor}{n} < \frac{s}{n}} = \frac{1}{n} \sum^n_{i=1}  1_{\frac{\pi'_i }{n} < \frac{s}{n}},
\]
where $(c)$ follows from the fact that $s$ is an integer and $\pi'_i = \lfloor U_i  n \rfloor$, a uniform random variable from the set $\{0, 1, \ldots, n-1\}$. Moreover, $\pi_i'$s are independent.

The above example also provides an alternative view of the proposed quantizer. If  $\forall i$, $x_i = s/n$, then the random variables in the standard stochastic random quantizer can be viewed as \emph{sampling-with-replacement} from the set $\{0/n, 1/n\ldots, (n-1)/n\}$ while the random variables in the correlated quantizer can be viewed as an  \emph{sampling-without-replacement} from the set $\{0/n, 1/n\ldots, (n-1)/n\}$. Since sampling without replacement has smaller variance, the proposed estimator performs better and in the particular case of this example has zero error.

We will generalize the above result and show a data dependent bound in its error in Theorem~\ref{thm:onedimonebitzq}. For completeness, the full algorithm when each input belongs to the range $[l, r)$ is given in Algorithm~\ref{fig:one_d}. 

\begin{algorithm}[t]
\begin{center}
\begin{minipage}{\textwidth}
\noindent\textbf{Input}: $x_1, x_2,\ldots, x_n, l, r$.\newline
\noindent Generate $\pi$, a random permutation of $\{0, 1, 2,\ldots, n-1\}$. \newline
\noindent For $i = 1 \ \text{to} \ n$:
\begin{enumerate}
\item $y_i = \frac{x_i \zs{- l}}{r-l}$.
\item $U_i = \frac{\pi_i}{n} + \gamma_i$, where $\gamma_i \sim U[0, 1/n)$.
\item $Q_i(x_i) = (r-l) 1_{U_i < y_i}$.
\end{enumerate}
\noindent \textbf{Output}: $\frac{1}{n} \sum^n_{i=1} Q_i(x_i)$.
\end{minipage}
\end{center}
\caption{\textsc{OneDimOneBitCQ}}
\label{fig:one_d}
\end{algorithm}

\begin{restatable}{theorem}{OneDomOneBitCQ}
\label{thm:onedimonebitzq}
If all the inputs lie in the range $[l, r)$, the proposed estimator \textsc{OneDimOneBitCQ} is unbiased and the mean squared error is upper bounded by 
\begin{align*}
   \frac{3}{n} \cdot \sigmamd (r - l) + \frac{12(r - l)^2}{n^2},
\end{align*}
where $\sigmamd$ is defined in~\eqref{eq:sigmamd}.
\end{restatable}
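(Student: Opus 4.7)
I would begin with an affine reduction: setting $y_i = (x_i - l)/(r-l) \in [0,1)$ and working in $y$-coordinates throughout, it suffices to prove the theorem in the special case $l=0, r=1$, since the scaling introduces the $(r-l)$ factors at the end. Unbiasedness is immediate because each $U_i$ is marginally uniform on $[0,1]$, so $Z_i := 1_{U_i < x_i}$ is Bernoulli with mean $x_i$. Setting $S = \sum_i Z_i$, the MSE is $\mathrm{Var}(S)/n^2$, so the goal becomes $\mathrm{Var}(S) \le 3n\sigmamd + O(1)$.

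The natural tool is the law of total variance, conditioning on the permutation $\pi$. Given $\pi$, the $Z_i$ are conditionally independent Bernoulli with parameter $\psi_i(\pi_i) := \phi(nx_i - \pi_i)$ for $\phi(u) = \min(1,\max(0,u))$. Writing $m_i = \lfloor n x_i \rfloor$ and $\epsilon'_i = nx_i - m_i \in [0,1)$ gives $\psi_i(j) = 1_{j<m_i} + \epsilon'_i \cdot 1_{j=m_i}$, so $\mathrm{Var}(Z_i \mid \pi)$ is nonzero only on the event $\{\pi_i = m_i\}$ and contributes at most $1/(4n)$ per client in expectation; hence $\EE[\mathrm{Var}(S\mid\pi)] \le 1/4$, which accounts for the $O(1/n^2)$ tail in the MSE.

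The main work is bounding $\mathrm{Var}(G(\pi))$ for $G(\pi) := \EE[S \mid \pi] = \sum_i \psi_i(\pi_i)$. I would split $G = \bar S + G_\epsilon$ with $\bar S = \sum_i 1_{\pi_i < m_i}$ and $G_\epsilon = \sum_i \epsilon'_i \, 1_{\pi_i = m_i}$. A direct expansion of $\EE[G_\epsilon^2]$, using $\Pr(\pi_i = m_i) = 1/n$ and $\Pr(\pi_i = m_i, \pi_{i'} = m_{i'}) \le 1/(n(n-1))$, yields $\mathrm{Var}(G_\epsilon) \le 2$. For $\bar S$, I would apply Hoeffding's exact variance formula for permutation statistics, $\mathrm{Var}(\sum_i a_{i,\pi(i)}) = \frac{1}{n-1}\sum_{i,j}(a_{i,j} - \bar a_{i\cdot} - \bar a_{\cdot j} + \bar a)^2$, to $a_{i,j} = 1_{j<m_i}$. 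After cancellation (using $\bar a_{i\cdot} = m_i/n$, $\bar a_{\cdot j} = a_j/n$ with $a_j := \#\{i: m_i > j\}$, and $\bar a = \bar m/n$) this reduces to
\[
\mathrm{Var}(\bar S) \le \frac{1}{n(n-1)}\sum_j a_j(n-a_j).
\]

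The crucial combinatorial identity is $\sum_j a_j(n-a_j) = \tfrac{1}{2}\sum_{i,i'}|m_i - m_{i'}|$, obtained by swapping summation order and counting, for each pair $(i,i')$, the integers $j$ separating $m_i$ and $m_{i'}$. The triangle inequality then gives $\sum_{i,i'}|m_i - m_{i'}| \le 2n \sum_i |m_i - \bar m|$, and the elementary bounds $|m_i - nx_i| < 1$ and $|\bar m - n\mean| \le 1$ yield $|m_i - \bar m| \le n|x_i - \mean| + 2$, producing $\mathrm{Var}(\bar S) \le n\sigmamd + O(1)$. Assembling $\mathrm{Var}(G) \le 2\mathrm{Var}(\bar S) + 2\mathrm{Var}(G_\epsilon)$ and adding the conditional-variance piece gives $\mathrm{Var}(S) \le 3n\sigmamd + O(1)$, and undoing the scaling recovers the $(r-l)$ factors in the theorem. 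The main obstacle I anticipate is not any one conceptual step but tracking the constants through Hoeffding's formula carefully enough to land on the stated $(3,12)$ rather than a slightly worse pair; every ingredient is elementary, but this bookkeeping is where slack most easily accumulates.
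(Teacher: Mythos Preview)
Your proposal is correct and takes a genuinely different route from the paper. The paper works with the three-term telescoping split
\[
\sum_i x_i - \sum_i Q_i(x_i) \;=\; \Bigl(\sum_i x_i - \sum_i y_i\Bigr) + \Bigl(\sum_i y_i - \sum_i Q_i(y_i)\Bigr) + \Bigl(\sum_i Q_i(y_i) - \sum_i Q_i(x_i)\Bigr),
\]
with $y_i = \lfloor n x_i\rfloor /n$, applies $(a+b+c)^2\le 3(a^2+b^2+c^2)$, and then computes the pairwise moments $\Pr(\pi_i<ny_i,\,\pi_j<ny_j)$ directly from the permutation law to obtain the key quantity $\tfrac{1}{2(n-1)}\sum_{i\ne j}|y_i-y_j|$. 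You instead condition on $\pi$ and use the law of total variance together with Hoeffding's permutation-statistic variance formula; your combinatorial identity $\sum_j a_j(n-a_j)=\tfrac12\sum_{i,i'}|m_i-m_{i'}|$ recovers exactly the same key quantity. Both routes are elementary and end at the same place; yours is slightly cleaner because the conditional-variance step isolates the $\gamma_i$ contribution without the lossy $(a+b+c)^2$ inequality. In particular, your worry about constants is unfounded: your assembly $\mathrm{Var}(S)\le \EE[\mathrm{Var}(S\mid\pi)] + 2\mathrm{Var}(\bar S)+2\mathrm{Var}(G_\epsilon)$ actually yields a leading coefficient of~$2$ on $n\sigmamd$ rather than the paper's~$3$, so you will comfortably meet (and beat) the stated $(3,12)$.
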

We provide the proof of the theorem in Appendix~\ref{app:onedimonebitzq}. \th{Note that $\sigmamd = 0$ in the toy examples described before.}  We now extend the results to $k$ levels. A standard way of extending one bit quantizers to multiple bits is to divide the interval into multiple fixed length intervals and use stochastic quantization in each interval. While this does provide good worst-case bounds, we cannot get data-dependent bounds in terms of mean deviation. This is because there can be examples in which the samples lie in two different intervals and using stochastic quantization in each interval separately can increase the error. For example, if $k=4$ and we divide the $[0,1]$ into intervals $[0, 1/3]$, $[1/3, 2/3]$, $[2/3, 1]$. If all points are in the set $\{1/3 - \epsilon,   1/3+\epsilon\}$, then they will belong to two different intervals, which yields looser bounds. 

We overcome this, by dividing the input space into randomized intervals. Even though the points may lie in different intervals with randomized intervals, we use the fact that the chance of it happening is small to get bounds in terms of absolute deviation. More formally, let $c_1, c_2, \ldots, c_k$ be $k$ levels such that $c_1$ is uniformly distributed in the interval $\left[ -\frac{1}{k}, 0 \right)$ and $c_i = c_1 + (i-1) \beta$ where $\beta = \frac{k+1}{k(k-1)}$. Observe that with these definitions,
\[
c_k \geq -\frac{1}{k} + (k-1) \cdot \frac{k+1}{k(k-1)} = -\frac{1}{k} + \frac{k+1}{k} = 1.
\]
Let 
\[
c'_i = \max_{c_i < x_i} c_i
\]
If $x \in [0,1]$, we use the following algorithm:
\[
Q_i(x_i) = c'_i + \beta Q'_i\left(\frac{x_i - c'_i}{\beta} \right),
\]
where $Q'_i$ is the two-level quantizer in Algorithm~\ref{fig:one_d}.
The full algorithm when each input belongs to the range $[l, r)$ is given in Algorithm~\ref{fig:one_d_k_bits} and we provide its theoretical guarantee in Theorem~\ref{thm:onedimkbitzq}. We provide the proof of the theorem in Appendix~\ref{app:onedimkbitzq}. 

\begin{algorithm}[t]
\begin{center}
\begin{minipage}{\columnwidth}
\noindent\textbf{Input}: $x_1, x_2,\ldots, x_n, l, r$.\newline
\noindent Let $c_1, c_2, \ldots, c_k$ be $k$ levels such that $c_1$ is uniformly distributed in the interval $\left[ -\frac{1}{k}, 0 \right)$ and $c_i = c_1 + (i-1) \beta$ where $\beta = \frac{k+1}{k(k-1)}$. \newline
\noindent Generate $\pi$, a random permutation of $\{0, 1, 2,\ldots, n-1\}$. \newline
\noindent For $i = 1 \ \text{to} \ n$:
\begin{enumerate}
\item $y_i = \frac{x_i \zs{- l}}{(r-l) \beta}$.
\item $c'_i = \max_{c_i < y_i} c_i$.
\item $U_i = \frac{\pi_i}{n} + \gamma_i$, where $\gamma_i \sim U[0, 1/n)$.
\item $Q_i(x_i) = (r-l) \cdot \left(c'_i + \beta 1_{U_i < y_i} \right)$.
\end{enumerate}
\noindent \textbf{Output}: $\frac{1}{n} \sum^n_{i=1} Q_i(x_i)$.
\end{minipage}
\end{center}
\caption{\textsc{OneDimKLevelsCQ}}
\label{fig:one_d_k_bits}
\end{algorithm}

\begin{restatable}{theorem}{OneDimKLevelsCQ}
\label{thm:onedimkbitzq}
If all the inputs lie in the range $[l, r)$, $k \geq 3$, the proposed estimator \textsc{OneDimKLevelsCQ} is unbiased and the mean squared error is upper bounded by
\begin{align*}
  \frac{12}{n} \cdot \min \left( \frac{\sigmamd \cdot (r - l)}{k}, \frac{(r-l)^2}{k^2} \right)
  + \frac{48(r-l)^2}{n^2k^2},
\end{align*}
where $\sigmamd$ is defined in~\eqref{eq:sigmamd}.
\end{restatable}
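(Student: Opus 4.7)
The plan is to condition on the random shift $c_1$ -- which determines all the levels $c_1, \ldots, c_k$ and hence every snapped-down level $c'_i$ -- and reduce the $k$-level problem to an instance of the one-bit correlated quantizer of Theorem~\ref{thm:onedimonebitzq} applied to the per-client fractional offsets. Writing $y_i = (x_i - l)/(r-l)$, $\tilde y_i = y_i - c'_i$, and $z_i = \tilde y_i / \beta \in [0,1)$, a direct calculation will give
\[
\est - \mean \;=\; \beta (r-l) \left( \frac{1}{n}\sum_i 1_{U_i < z_i} \;-\; \frac{1}{n}\sum_i z_i \right),
\]
i.e.\ the scaled error is exactly that of \textsc{OneDimOneBitCQ} applied to the values $z_i$. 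Because the $U_i$ are produced by the same shared-permutation construction as in Algorithm~\ref{fig:one_d}, Theorem~\ref{thm:onedimonebitzq} lifts through the conditioning to yield unbiasedness and
\[
\EE\!\left[(\est - \mean)^2 \mid c_1\right] \;\le\; \beta^2 (r-l)^2 \left( \frac{3\, \sigmamd(z)}{n} + \frac{12}{n^2} \right),
\]
where $\sigmamd(z)$ denotes the mean absolute deviation of $(z_1,\ldots,z_n)$. Since $\beta \le 2/k$ for $k \ge 3$, the constant $\frac{12}{n^2}$ summand already supplies the $\frac{48(r-l)^2}{n^2 k^2}$ additive term in the claim.

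The heart of the argument is controlling $\EE_{c_1}\!\bigl[\beta\,\sigmamd(z)\bigr] = \EE_{c_1}\!\bigl[\frac{1}{n}\sum_i |\tilde y_i - \overline{\tilde y}|\bigr]$. I would start with the standard inequality $\frac{1}{n}\sum_i |\tilde y_i - \overline{\tilde y}| \le \frac{1}{n^2}\sum_{i,j} |\tilde y_i - \tilde y_j|$ and treat each pair separately. If $x_i$ and $x_j$ land in the same sub-interval (so $c'_i = c'_j$) then $|\tilde y_i - \tilde y_j| = |y_i - y_j|$; otherwise $|\tilde y_i - \tilde y_j| \le \beta$. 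Because $c_1$ is uniform in a window of length $1/k$ while the grid spacing is $\beta > 1/k$, at most one level $c_\ell$ can separate a pair of points within distance $\beta$, and a direct measure computation shows $\Pr[\text{different sub-intervals}] \le \min(k|y_i - y_j|, 1)$. Combined with $\beta k \le 2$ this gives
\[
\EE_{c_1}\!\bigl[|\tilde y_i - \tilde y_j|\bigr] \;\le\; |y_i - y_j| + \beta \cdot \min(k|y_i - y_j|, 1) \;=\; \cO(|y_i - y_j|),
\]
and summing over pairs via $\frac{1}{n^2}\sum_{i,j}|y_i - y_j| \le 2\, \sigmamd/(r-l)$ yields $\EE_{c_1}\!\bigl[\beta\,\sigmamd(z)\bigr] = \cO\!\bigl(\sigmamd/(r-l)\bigr)$.

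Plugging this into the conditional MSE bound produces the $\cO\!\bigl(\sigmamd(r-l)/(nk)\bigr)$ leading term. The alternative side of the $\min$ in the theorem follows immediately from the trivial bound $\sigmamd(z) \le 1$ applied before taking expectation, which directly yields the $(r-l)^2/(nk^2)$ term via the $\beta^2$ prefactor. The main obstacle, as flagged above, is the boundary-crossing probability estimate: one must exploit the fact that the grid is translated by a uniform amount in a window of length $1/k$ to argue that a given pair of points is separated by a grid line with probability only linear in their actual distance rather than paying the full $\beta$ per pair. This linearity is precisely what trades the worst-case $1/k$ factor for the data-dependent $\sigmamd$ in the leading term of the MSE, and getting the constants tight enough to give the stated $\frac{12}{n}$ (as opposed to a larger universal constant) requires the analysis to be carried out essentially as above without any further slack.
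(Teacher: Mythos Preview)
Your plan coincides with the paper's proof: condition on the random offset $c_1$, reduce to Theorem~\ref{thm:onedimonebitzq} on the fractional parts $z_i$, and then control $\EE_{c_1}\bigl[\sum_{i\neq j}|z_i-z_j|\bigr]$ via a boundary-crossing probability estimate for $\Pr(c'_i\neq c'_j)$. The one visible difference is the estimate itself: you use $\Pr(c'_i\neq c'_j)\le k\,|y_i-y_j|$, whereas the paper asserts $\Pr(c'_i\neq c'_j)\le |x_i-x_j|/\beta$; since $c_1$ is uniform on an interval of length $1/k$ (not $\beta$), yours is in fact the correct bound and the paper's is too tight by the factor $k\beta=(k+1)/(k-1)\le 2$, so your caveat about not matching the leading constant $12/n$ exactly is warranted but reflects a slip in the paper's own constant-tracking rather than a gap in your argument.
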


\section{Extensions to High Dimensions}

To extend the algorithm to high-dimensional data, we can quantize each coordinate independently using the above quantization scheme. However such an approach is suboptimal. 
In this section, we show that the two approaches used in \citet{suresh2017distributed} namely variable length coding and random rotation can be used here.

\subsection{Variable length coding}

One natural way to extend the above algorithm to high dimensions is to use \textsc{OneDimKLevelsSC} on each coordinate using $k$ bits. \citet{suresh2017distributed, alistarh2017qsgd} observed that while each coordinate is quantized by $k$ bits, instead of using $d \cdot \log_2 k$ bits of communication, one can reduce the communication cost by using variable length codes such as Elias-Gamma codes or Arithmetic coding. We refer to this algorithm as \textsc{EntropyCQ}. We use the same approach and show the following corollary. 
\begin{restatable}{corollary}{EntropyCQ}
\label{cor:entropycq}
If all the inputs lie in the range $\BB^d(R)$, the proposed estimator \textsc{EntropyCQ} is unbiased and the mean squared error is upper bounded by
\begin{align*}
  \frac{c}{n} \cdot \min \left( \frac{\sqrt{d}\sigmadmd R}{k}, \frac{dR^2 }{k^2} \right)
  + \frac{c^2d}{n^2k^2},
\end{align*}
where $\sigmadmd$ is defined in~\eqref{eq:sigmadmd} and $c$ is a constant.
Furthermore, overall the quantizer can be communicated to the server in $\mathcal{O} \left( d \cdot \left(1 + \log_2 \left( \frac{(k-1)^2}{2d} + 1 \right) \right)\right) + k \log_2 \frac{(k+d)e}{k} + \tilde{O}(1) $ bits per client in expectation. 
\end{restatable}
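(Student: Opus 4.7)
The plan is to reduce the $d$-dimensional problem to $d$ coordinatewise applications of Theorem~\ref{thm:onedimkbitzq} and then control the communication cost through a variable-length code. First, for each coordinate $j \in \{1,\ldots,d\}$ I would invoke Algorithm~\ref{fig:one_d_k_bits} on the scalars $x_{1,j}, \ldots, x_{n,j}$ with the per-coordinate range $l = -R$, $r = R$; the hypothesis $\|x_i\|_2 \le R$ guarantees each scalar lies in $[-R, R]$. The random permutation $\pi$ and the shift $c_1$ can in fact be shared across coordinates via the seed. Unbiasedness is then immediate from the coordinatewise unbiasedness in Theorem~\ref{thm:onedimkbitzq} and linearity of expectation.

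For the MSE I would write $\EE \|\est - \mean\|_2^2 = \sum_{j=1}^d \EE[(\est_j - \mean_j)^2]$ and apply Theorem~\ref{thm:onedimkbitzq} to each term. Denoting $\sigmamd^{(j)} = \tfrac1n \sum_i |x_{i,j}-\mean_j|$, each summand is at most $\tfrac{12}{n}\min\!\bigl(\tfrac{2R\,\sigmamd^{(j)}}{k}, \tfrac{4R^2}{k^2}\bigr) + \tfrac{192R^2}{n^2 k^2}$. Summing the additive pieces yields the $O(dR^2/(n^2 k^2))$ tail term. For the leading piece, summing the $(r-l)^2/k^2$ branch of the min gives $O(dR^2/(nk^2))$ directly, while summing the $\sigmamd^{(j)}(r-l)/k$ branch reduces to controlling $\sum_j \sigmamd^{(j)}$; here I would invoke the $\ell_1$--$\ell_2$ inequality at the data level,
\[
\sum_{j=1}^d \sigmamd^{(j)} \;=\; \frac{1}{n}\sum_{i=1}^n \|x_i - \mean\|_1 \;\le\; \frac{\sqrt{d}}{n}\sum_{i=1}^n \|x_i - \mean\|_2 \;=\; \sqrt{d}\,\sigmadmd,
\]
which produces the $O(\sqrt{d}\,\sigmadmd R/(nk))$ term. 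Taking the minimum of these two bounds after the coordinatewise sum gives the stated error.

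For the communication bound I would follow the variable-length coding analysis of \citet{suresh2017distributed, alistarh2017qsgd}. After normalization, each coordinate's output of \textsc{OneDimKLevelsCQ} is determined by a nonnegative integer level index $m_{i,j} \in \{0,\ldots,k-1\}$, together with the $O(1)$-bit shared description of $c_1$ and $\pi$. Encoding $m_{i,j}$ with an Elias-gamma or arithmetic code costs $O(1 + \log_2(m_{i,j}+1))$ bits per coordinate, so by Jensen's inequality applied to the concave map $t \mapsto \log_2(t+1)$, the expected total per-client cost is at most $d + d\log_2\!\bigl(1 + \tfrac{1}{d}\sum_j \EE[m_{i,j}^2]\bigr)$. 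The $\ell_2$ assumption $\|x_i\|_2 \le R$ combined with the $2R\beta$-spaced grid translates into $\sum_j \EE[m_{i,j}^2] = O((k-1)^2/2)$, producing the $\log_2((k-1)^2/(2d)+1)$ factor. The additive $k\log_2((k+d)e/k) + \tilde{O}(1)$ accounts for transmitting the random shift $c_1$ and encoding overhead.

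The main obstacle is the communication calculation: producing the exact closed form in the statement requires both a tight Jensen step and a careful bound on $\sum_j \EE[m_{i,j}^2]$ using the $\ell_2$ ball structure, and the hidden constants depend sensitively on how one interprets the normalization step $y_i = (x_i-l)/((r-l)\beta)$ in Algorithm~\ref{fig:one_d_k_bits}. The MSE bound, by contrast, is a mechanical coordinatewise application of Theorem~\ref{thm:onedimkbitzq} glued together by Cauchy--Schwarz.
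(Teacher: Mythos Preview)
Your proposal is correct and matches the paper's own argument: the paper simply says the unbiasedness and MSE follow by applying Theorem~\ref{thm:onedimkbitzq} to each coordinate (your $\ell_1$--$\ell_2$ step $\sum_j \sigmamd^{(j)} \le \sqrt{d}\,\sigmadmd$ is exactly the missing glue), and that the communication bound is ``similar to [Theorem~4]{suresh2017distributed} and omitted.'' Your only slip is in attributing the additive $k\log_2\frac{(k+d)e}{k}$ term to the random shift $c_1$; in the Suresh et al.\ analysis that term arises from encoding the positions of the large-magnitude coordinates, while $c_1$ and $\pi$ are shared randomness and do not enter the per-client bit count at all.
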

The proof of unbiasedness and variance follows directly from Theorem~\ref{thm:onedimkbitzq} applied to each coordinate. The proof of communication cost is similar to that of  \citep[Theorem 4]{suresh2017distributed} and omitted.
Based on the above corollary, we can set $k = \sqrt{d}$ bits and have a quantizer that uses $\cO(d)$ bits and has error  
\begin{align*}
  \frac{c}{n} \cdot \min \left( \sigmadmd R, R^2 \right)
  + \frac{cR^2}{n^2}.
\end{align*}

\subsection{Random rotation}

Instead of using variable length code, one can use a random rotation matrix to reduce the $L_\infty$ norm of the vectors. We use this approach and show the following result. The proof is given in Appendix~\ref{app:walshcq}. Similar to \citet{suresh2017distributed}, one can use the efficient Walsh-Hadamard rotation which takes   $\mathcal{O}(d \log d)$ time to compute (Algorithm \ref{fig:hadamard_d}). 

\begin{restatable}{corollary}{WalshHadamardCQ}
\label{cor:walshcq}
If all the inputs lie in the range $\BB^d(R)$, the proposed estimator \textsc{WalshHadamardCQ} has bias at most $3R\sqrt{2\log(dn)}/(n^{2}d^{3/2})$ and the mean squared error is upper bounded by
\begin{align*}
     \frac{ c \log (dn)}{n} \cdot \left( \min \left(  \frac{\sigmadmd R}{k}, \frac{R^2}{k^2} \right)
      + \frac{R^2}{nk^2} + \frac{ R^2}{n^3 d^3} \right),
\end{align*}
where $\sigmadmd$ is defined in~\eqref{eq:sigmamd} and $c$ is a constant.
Furthermore, overall the quantizer can be communicated to the server in $d \cdot \log_2 k + \tilde{O}(1) $ bits per client in expectation. 
\end{restatable}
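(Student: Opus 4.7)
The plan is to reduce the $d$-dimensional estimation problem to $d$ essentially independent scalar instances by first applying a random Walsh--Hadamard rotation that shrinks every coordinate to $\tilde O(R/\sqrt d)$. Let $W = \tfrac{1}{\sqrt d} M D$, where $M$ is the $d \times d$ Walsh--Hadamard matrix and $D$ is a diagonal of Rademacher signs drawn from shared randomness. Each client rotates $\widetilde x_i = W x_i$, truncates every coordinate to $[-r_*, r_*]$ with $r_* = R\sqrt{2 \log(dn)/d}$, and then runs Algorithm~\ref{fig:one_d_k_bits} coordinatewise using shared (but coordinate-specific) permutations; the server averages coordinatewise and returns $W^\top$ applied to the result. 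Since $W$ is orthogonal, $\widetilde{\mean} = W \mean$, $\|\widehat{\mean} - \mean\|_2^2 = \|\widehat{\widetilde{\mean}} - \widetilde{\mean}\|_2^2$, and rotated differences preserve $\ell_2$ norms, so $\sigmadmd$ is unchanged by the rotation.

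Bias is driven entirely by truncation. The standard Walsh--Hadamard concentration bound gives, for any $x \in \BB^d(R)$ and any fixed coordinate $j$, $\Pr[\lvert \widetilde x_j \rvert > r_*] \leq (nd)^{-2}$, so a union bound over the $nd$ (client, coordinate) pairs controls the total truncation probability. Each truncation event changes the affected coordinate by at most $O(R)$, which after averaging over clients and applying the inverse rotation yields the stated $O(R\sqrt{\log(dn)}/(n^2 d^{3/2}))$ $\ell_2$ bias. For the MSE, Theorem~\ref{thm:onedimkbitzq} applied to each rotated coordinate with $r - l = 2 r_*$ gives
\[
\EE \|\widehat{\mean} - \mean\|_2^2 \leq \sum_{j=1}^d \frac{12}{n} \min\!\left(\frac{\sigma_j \cdot 2 r_*}{k},\, \frac{4 r_*^2}{k^2}\right) + \frac{48 \, d (2 r_*)^2}{n^2 k^2} + (\text{truncation variance}),
\]
where $\sigma_j = \tfrac{1}{n}\sum_i |\widetilde x_{i,j} - \widetilde{\mean}_j|$ is the MAD of the $j$-th rotated coordinate. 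The second sum evaluates to $O(R^2 \log(dn)/(n^2 k^2))$, which after taking the min with the worst-case coordinatewise sum yields the $R^2/(n k^2)$ contribution (scaled by $\log(dn)$) in the stated bound. For the deviation-dependent sum, Cauchy--Schwarz gives $\sum_j \sigma_j = \tfrac{1}{n}\sum_i \|\widetilde x_i - \widetilde{\mean}\|_1 \leq \tfrac{\sqrt d}{n}\sum_i \|x_i - \mean\|_2 = \sqrt d \, \sigmadmd$, producing $O(\sigmadmd R \sqrt{\log(dn)}/(n k))$, which is trivially upper-bounded by the stated $\log(dn)/n \cdot \sigmadmd R/k$. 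The residual $R^2/(n^3 d^3)$ term comes from pairing the $O(R^2)$ per-coordinate cost of an overflow with its $(nd)^{-2}$ probability when bounding the truncation variance.

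Communication is $d \log_2 k$ bits for the $d$ quantization indices; the rotation signs and per-coordinate permutations are shared randomness and hence free, while any overflow payload contributes $\tilde O(1)$ expected bits by the tail bound. The main obstacle is calibrating $r_*$ so that the bias and the extra truncation variance both fit within the claimed budget without inflating the prefactor on the main $\sigmadmd$ term beyond $\log(dn)$; a secondary subtlety is to draw the per-coordinate permutations independently so that Theorem~\ref{thm:onedimkbitzq} applies cleanly coordinate by coordinate, without introducing cross-coordinate correlations in the MSE expansion.
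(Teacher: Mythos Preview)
Your proposal is correct and follows essentially the same route as the paper's proof: rotate by the randomized Walsh--Hadamard matrix, clip each coordinate to a band of width $\Theta(R\sqrt{\log(dn)/d})$, apply Theorem~\ref{thm:onedimkbitzq} coordinatewise, and use Cauchy--Schwarz to convert $\sum_j \sigma_j$ into $\sqrt d\,\sigmadmd$. The only point to tighten is the calibration you flag: with $r_* = R\sqrt{2\log(dn)/d}$ the Hoeffding/McDiarmid tail is only $O((nd)^{-1})$, not $(nd)^{-2}$; the paper takes $r_* = R\sqrt{8\log(dn)/d}$ to push the per-coordinate overflow probability to $(nd)^{-4}$, which after the union bound and the crude $O(R^2\log(dn))$ cost per overflow yields the stated bias and the $R^2/(n^3 d^3)$ residual term.
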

 We note that with communication cost of $\cO(d)$ bits, the bounds with the random rotation are sub-optimal by a logarithmic factor compared to the variable length coding approach. However, it may be the desired approach in practice due to ease of use or computation costs. 
 
 \begin{algorithm}[t]
\begin{center}
\begin{minipage}{\columnwidth}
\noindent\textbf{Input}: $x_1, x_2,\ldots, x_n \in \RR^d, R $.\newline
\noindent For $j \leq d$, let  $\pi^j$, a random permutation of $\{0, 1, 2,\ldots, n-1\}$. \newline
\noindent Let $W = \frac{1}{\sqrt{d}}H D$, where $H$ is a Hadamard matrix and $D$ is a diagonal matrix with independent Rademacher entries\footnotemark.
\newline
\noindent For $i = 1 \ \text{to} \ n$:
\begin{enumerate}
\item $y_i = \frac{\sqrt{d}}{R \sqrt{8\log (dn)}}  \cdot W x_i$.
\item For $j = 1 \ \text{to} \ d$:
\begin{enumerate}
    \item $y'_{i}(j) = \max(-1, \min(y_{i}(j), 1))$.
\end{enumerate}
\end{enumerate}
\noindent For $j = 1 \ \text{to} \ d$:
\begin{enumerate}
\item $z(j) = \textsc{OneDimKLevelsCQ}(y'_{i}(j), \ldots, y'_{n}(j), -1, 1)$.
\end{enumerate}
\noindent \textbf{Output}: $\frac{R \sqrt{8\log (dn)}}{\sqrt{d}}  W^{-1} \cdot z$.
\end{minipage}
\end{center}
\caption{\textsc{WalshHadamardCQ}}
\label{fig:hadamard_d}
\end{algorithm}

\footnotetext{Both matrices are of dimension $d\times d$. When $d$ is not a power of 2, to multiply with Hadamard matrices, we attach 0's at the end of the vectors without affecting asymptotic results.}

\section{Lower Bound}

In this section, we discuss information-theoretic lower bounds on the quantization error. We will focus on the one-dimensional case and show that \cq is optimal in terms of the dependence on $\sigmamd$ and $r-l$ under mild assumptions. For the general $d$-dimensional case, whether the dependence on $\sigmadmd$ and $R$ is tight is an interesting question to explore.

In the one-dimensional case with one bit (or constant bits) per client, we obtain the following lower bound, which shows that the upper bound in Theorem~\ref{thm:onedimonebitzq} is tight up to constant factors in terms of the dependence on $\sigmamd$ when $n = \tilde{\Omega}((r-l)/\sigmamd)$. Note that the condition is mild since when $n < (r-l)/\sigmamd$, the second term in  Theorem~\ref{thm:onedimonebitzq}, $(r-l)^2/n^2 > \sigmamd(r - l)/ n$. 
As shown in Theorem~\ref{thm:lower-klevel}, the $(r-l)^2/n^2 $ dependence can not be improved for \monotone quantizers. Whether it can be improved for general quantizers is an interesting future direction.
\begin{restatable}{theorem}{OneDimOneBitLower}
\label{thm:lower-onebit}
    For any $l, r$ and $\sigmamd < \frac{l-r}{2}$, and any one-bit quantizer $Q^n$, when $n > \frac{8(r-l)}{\sigmamd}\log ((r-l)/\sigmamd)$, there exists a dataset $x^n \in [l, r)^n$ with mean absolute deviation $O(\sigmamd)$, such that 
    \[
        \cE(Q^n, x^n) \ge \frac{\sigmamd(r - l)}{64 n}.
    \]
\end{restatable}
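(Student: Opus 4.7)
The plan is to apply Le Cam's two-point method to a shift-parametrized family of hypothesis pairs and then extract a good shift by averaging. Take $l=0$, $r=R$, write $\sigma=\sigmamd$, and set $\Delta=\sqrt{\sigma R/(8n)}$ so that $\Delta^2/16$ matches the target bound $\sigmamd(r-l)/(64n)$ up to a constant factor. For each shift $A\in[0,R-2\sigma-\Delta]$, introduce
\[
D^0_A:\; n/2 \text{ copies of } A, \ n/2 \text{ copies of } A+2\sigma; \qquad D^1_A:\; n/2 \text{ copies of } A+\Delta, \ n/2 \text{ copies of } A+2\sigma+\Delta.
\]
Both datasets have mean absolute deviation exactly $\sigma$, and their sample means differ by $\Delta$; Le Cam's inequality then gives, for any server estimator,
\[
\max\bigl(\EE[(\hat{x}-\bar{x}_0)^2\mid D^0_A],\ \EE[(\hat{x}-\bar{x}_1)^2\mid D^1_A]\bigr)\ \ge\ \tfrac{\Delta^2}{8}\bigl(1-\mathrm{TV}(A)\bigr),
\]
where $\mathrm{TV}(A)$ denotes the total variation distance between the induced message distributions under the two datasets.

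The rest of the argument aims to exhibit a shift $A^\star$ for which $\mathrm{TV}(A^\star)\le 1/2$; substituting into the Le Cam bound then produces a dataset $D^0_{A^\star}$ that witnesses the claim. Coupling the shared randomness $U$ and each client's private coins jointly between the two protocols and using a union bound over clients gives $\mathrm{TV}(A)\le\sum_{i\le n/2}\phi_i(A)+\sum_{i>n/2}\phi_i(A+2\sigma)$, where $\phi_i(x):=\Pr[Q_i(x)\ne Q_i(x+\Delta)]$. Averaging this inequality over $A\in[0,R-2\sigma-\Delta]$ and a change of variables reduce the task to bounding $\sum_{i=1}^n\int_0^R\phi_i(x)\,dx$.

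The main obstacle will be controlling $\int\phi_i$ for an \emph{arbitrary} one-bit quantizer, and this is where the one-bit constraint is essential. For each realization of $U$, $Q_i(\cdot;U)$ partitions $[0,R]$ into only two pieces, so the event $\{Q_i(x)\ne Q_i(x+\Delta)\}$ corresponds to a boundary of that partition lying in the window $(x,x+\Delta]$, making $\int\phi_i\,dx$ proportional to $\Delta$ times the expected number of such boundaries. I would then convert this structural observation into a usable TV estimate via a covering / coupon-collector argument over the $K=\lceil R/\sigma\rceil$ cells of width $\sigma$ partitioning $[0,R]$: the hypothesized $n\ge 8(R/\sigma)\log(R/\sigma)$ is precisely the coupon-collector threshold for covering $K$ cells with $n$ samples, and it is what guarantees that a well-chosen $A^\star$ achieves $\sum_i\phi_i(\cdot)\le 1/2$ rather than merely $O(n)$.

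Plugging this into Le Cam will yield $\cE(Q^n,D^0_{A^\star})\ge \Delta^2/16\ge \sigmamd(r-l)/(64n)$, and $D^0_{A^\star}$ has mean absolute deviation exactly $\sigmamd$ as required. The hardest part is the covering step in the previous paragraph: for the monotone ($k$-interval) quantizers handled in Theorem~\ref{thm:lower-klevel}, the bound $\int\phi_i\le\Delta$ is immediate from monotonicity, but for a fully adversarial one-bit quantizer one has to exploit the size-two output alphabet together with the quantitative $n$-condition, and that is precisely where the $\log((r-l)/\sigmamd)$ factor in the hypothesis gets used.
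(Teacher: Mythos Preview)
Your Le Cam / averaging plan has a genuine gap at the step you yourself flag as hardest. The claim that ``$Q_i(\cdot;U)$ partitions $[0,R]$ into only two pieces'' is exactly the $k$-interval hypothesis of Theorem~\ref{thm:lower-klevel}; for a general one-bit quantizer it is false. Take the deterministic quantizer $Q_i(x)=\lfloor x/\Delta\rfloor \bmod 2$: its output alphabet has size two, yet it has $\Theta(R/\Delta)$ boundaries, $\phi_i(x)=\Pr[Q_i(x)\neq Q_i(x+\Delta)]=1$ for almost every $x$, and consequently $\mathrm{TV}(A)=1$ for every shift $A$. No averaging over $A$ and no coupon-collector argument can produce an $A^\star$ with $\mathrm{TV}(A^\star)\le 1/2$; the quantizer is fixed by the adversary and the number of its boundaries has nothing to do with $n$ or with covering $\lceil R/\sigma\rceil$ cells. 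So the two-point approach stalls precisely here, even though estimation with that oscillating quantizer is obviously still hard --- the failure is that Le Cam only yields a nontrivial bound when $\mathrm{TV}<1$, and for general one-bit quantizers you cannot force that.

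The paper's proof avoids this obstacle by a different mechanism. It uses Yao's principle with an i.i.d.\ distribution supported on \emph{three} points $\{0,r/2,r\}$ with masses $\tfrac{\sigmamd}{2r},\,1-\tfrac{\sigmamd}{r},\,\tfrac{\sigmamd}{2r}$, and lower-bounds the error by the per-client conditional variance $\EE[(X_i-\EE[X_i\mid Q_i(X_i)])^2]$. The one-bit constraint enters via pigeonhole: any map from three points to two values must collapse two of them, and each of the three collision patterns gives conditional variance $\ge \sigmamd r/32$. The hypothesis $n>\tfrac{8(r-l)}{\sigmamd}\log\!\tfrac{r-l}{\sigmamd}$ is \emph{not} used to control the quantizer at all; it is used only at the very end, via a Chernoff bound, to argue that the random i.i.d.\ dataset has mean absolute deviation $O(\sigmamd)$ with probability close enough to one that conditioning on this event costs only a factor of two in the lower bound.
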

Turning to the $k$-level ($\log_2 k$ bits) case, when $\sigmamd < \frac{r - l}{2k}$, we are able to show that our estimator is optimal up to constant factors for a more restricted class of \monotone quantizers. \monotone quantizers include all quantization schemes under which the preimage of each quantized message, ignoring common randomness, is an interval.  To make the definition formal, we slightly abuse notation to assume each quantizer $Q$ admits another argument $\seed \in \{0,1\}^*$, which incorporates all randomness in the quantizer. Fix $s$, $Q(x, s)$ is a deterministic function of $x$.

\begin{definition} \label{def:monotone}
    A quantizer $Q: [l, r) \times \{0, 1\}^* \rightarrow [k]$ is said to be a \monotone quantizer if $\forall s \in \{0, 1\}^*$, there exists $k$ non-overlapping intervals $I_1, \ldots, I_k$ which partitions $[l, r)$, and $\forall j \in [k]$, $x_1, x_2 \in I_j$,
   $
        Q(x_1, s) = Q(x_2, s).
    $
\end{definition}
The class of \monotone quantizers
includes many common compression algorithms used in distributed optimization such as stochastic quantization and our proposed algorithm. 
For this restricted class of schemes, we get

\begin{restatable}{theorem}{OneDimKBitLower} \label{thm:lower-klevel}
    Given $l < r \in \mathbb{R}, k \in \mathbb{N}$ and $\sigmamd < \frac{r-l}{2k}$, for any \monotone quantizer $Q^n$, there exists a dataset $x^n \in [l, r)^n$ with mean absolute deviation $O(\sigmamd)$,  such that
    \[
        \cE(Q^n, x^n) \ge \frac{\sigmamd(r - l)}{64 n k} +  \frac{(r - l)^2}{128 n^2 k^2} .
    \]
\end{restatable}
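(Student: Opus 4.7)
I would prove the two additive terms separately, producing two hard datasets (both with mean absolute deviation $O(\sigmamd)$), and then combine via $\max(a,b)\ge(a+b)/2$ (the factor of $2$ loss can be absorbed into the constants by proving each piece with a little slack). Both arguments exploit the defining feature of a \monotone quantizer: for each fixing of its randomness, it is constant on every sub-interval of $[l,r)$ that contains none of its piece-boundaries.

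\textbf{First term} $\sigmamd(r-l)/(64\,nk)$. The plan is to reduce to Theorem~\ref{thm:lower-onebit} applied on a short window of length $L=(r-l)/(2k)$. For any fixed $s_i$, the expected number of boundaries of $Q_i(\cdot,s_i)$ lying in a uniformly random window of length $L$ inside $[l,r)$ is $(k-1)L/(r-l)\le 1/2$. Averaging jointly over the window center $c$, the randomness $s_i$, and the client index $i$ yields a single $c^*\in[l,r)$ such that, for a constant fraction of clients, the sub-interval $J:=[c^*-L/2,\,c^*+L/2)$ lies entirely inside one piece of $Q_i(\cdot,s_i)$ with probability at least $1/4$. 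On datasets $x^n\in J^n$, those clients' outputs are constant in $x_i$ and carry no information about where $x_i$ sits inside $J$, so the restriction of $Q^n$ to $J^n$ is no stronger than a one-bit scheme on $J$. The hypothesis $\sigmamd<(r-l)/(2k)=L$ matches the range condition of Theorem~\ref{thm:lower-onebit} on $J$, and invoking that theorem (with $r-l$ replaced by $L$) yields a dataset in $J^n$ of mean absolute deviation $O(\sigmamd)$ and error $\Omega(\sigmamd\, L/n)=\Omega(\sigmamd(r-l)/(nk))$.

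\textbf{Second term} $(r-l)^2/(128\,n^2k^2)$. I apply Le Cam's two-point method with the constant datasets $x^n=(x^*,\dots,x^*)$ and $y^n=(x^*+\delta,\dots,x^*+\delta)$ for $\delta:=(r-l)/(2nk)$, both having $\sigmamd=0$. Since, for each client, $\EE_{x,s_i}[\#\{\text{boundaries of }Q_i(\cdot,s_i)\text{ in }(x,x+\delta)\}]=(k-1)\delta/(r-l)\le 1/(2n)$, averaging also over $i$ produces a single $x^*$ with $\frac{1}{n}\sum_i\Pr_{s_i}[Q_i(x^*,s_i)\ne Q_i(x^*+\delta,s_i)]\le 1/(2n)$. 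A union bound over clients then gives total variation distance at most $1/2$ between the output distributions of $Q^n$ under $x^n$ and $y^n$, and Le Cam's inequality yields $\max(\cE(Q^n,x^n),\cE(Q^n,y^n))\ge \delta^2(1-\mathrm{TV})^2/8=(r-l)^2/(128\,n^2k^2)$.

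\textbf{Main obstacle.} I expect the reduction in the first term to be the delicate step. Restricted to $J$, each client's quantizer is not literally a one-bit quantizer (it may distinguish up to $k$ pieces in the event $J$ is split by boundaries), and the ``constant on $J$'' event holds only with constant probability. Handling this cleanly requires either restricting to the sub-population of clients for which the event holds (losing only a constant factor in $n$) or replacing the black-box reduction by a mutual-information argument, using that the expected number of pieces meeting $J$ is at most $3/2$ and hence each client conveys $\log(3/2)<1$ bit about $x_i\in J$ on average. Tracking the sample-size hypothesis of Theorem~\ref{thm:lower-onebit} through the rescaling to $J$, and verifying that its failure regime coincides precisely with the regime in which the Le-Cam term dominates, is the other piece of bookkeeping that needs care.
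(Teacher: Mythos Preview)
Your two-term decomposition and the $\max\ge(\cdot+\cdot)/2$ combination match the paper. Your Le Cam argument for the $(r-l)^2/(n^2k^2)$ term is correct and is a valid alternative to the paper's route, which instead passes to deterministic quantizers via Yao's minimax and then pigeonholes over $2nk$ equispaced constant datasets $v_j=(j-1)(r-l)/(2nk)$: with $n$ clients each contributing $k-1$ breakpoints, at least $nk-1$ adjacent pairs $(v_j,v_{j+1})$ produce identical outputs $Q^n$, forcing average squared error $\Omega((r-l)^2/(n^2k^2))$.

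The first-term reduction to Theorem~\ref{thm:lower-onebit} has a real gap, and neither fix you propose closes it. The sub-population patch fails outright: a client $i$ with $\Pr_s[Q_i(\cdot,s)\text{ constant on }J]\ge 1/4$ may, on the complementary event of probability up to $3/4$, resolve $x_i\in J$ with up to $\log_2 k$ bits, so the restriction to such clients is \emph{not} a one-bit scheme and Theorem~\ref{thm:lower-onebit} does not apply. The mutual-information patch is a correct entropy bound, but Theorem~\ref{thm:lower-onebit} is not proved information-theoretically; you would then need a separate MSE--versus--mutual-information inequality for this mean-estimation problem, which is new work rather than a reduction.

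The paper sidesteps the single-window difficulty. After Yao's reduction it takes a uniform mixture over $2k$ product laws $P_j$, each i.i.d.\ from a two-point distribution on the adjacent grid pair $\{(j{-}1)\tfrac{r-l}{2k},\,j\tfrac{r-l}{2k}\}$ with variance $\Theta(\sigmamd(r-l)/k)$. For any deterministic $k$-interval $Q_i$ there are only $k-1$ breakpoints, so at least $k-1$ of the $2k-1$ adjacent grid pairs lie inside a single interval of $Q_i$; for those $j$ the output $Q_i(X_i)$ is constant and Lemma~\ref{lem:mse} returns the full variance. Averaging over $j$ delivers the bound with no need for one window simultaneously good for all clients and all seeds. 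If you want to keep your single-window picture, the workable salvage is to drop the black-box call to Theorem~\ref{thm:lower-onebit} and invoke Lemma~\ref{lem:mse} directly for each fixed seed: lower-bound client $i$'s conditional variance by $\mathrm{Var}(X_i)\cdot\mathbf{1}[Q_i(\cdot,s)\text{ constant on }J]$, take $\EE_s$, and use your averaging $\EE_s\bigl[\tfrac{1}{n}\sum_i\mathbf{1}[\text{const on }J]\bigr]\ge 1/2$.
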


\th{Under the condition, $\sigmamd < \frac{r-l}{2k}$, the above theorem shows that \textsc{OneDimKLevelsCQ} is nearly-optimal in the class of \monotone quantizers.} We defer the proof of the theorems to Appendix~\ref{app:lower-proof}.

\section{Experiments}

\begin{figure}[t]
\centering
\centering
 \begin{subfigure}[t]{.4\linewidth}
    \centering\includegraphics[width=1.0\linewidth]{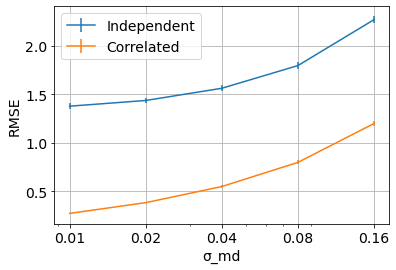}
    \caption{RMSE as a function of $\sigmamd$.}
  \end{subfigure} 
  \begin{subfigure}[t]{.4\linewidth}
    \centering\includegraphics[width=1.0\linewidth]{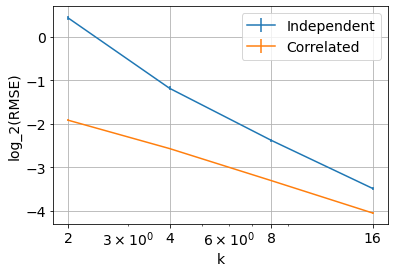}
    \caption{RMSE as a function of $k$.}
  \end{subfigure}
      \begin{subfigure}[t]{.4\linewidth}
    \centering\includegraphics[width=1.0\linewidth]{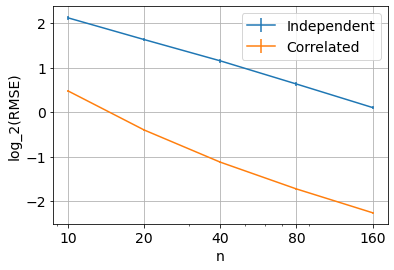}
    \caption{RMSE as a function of $n$.}
  \end{subfigure}
    \begin{subfigure}[t]{.4\linewidth}
    \centering\includegraphics[width=1.0\linewidth]{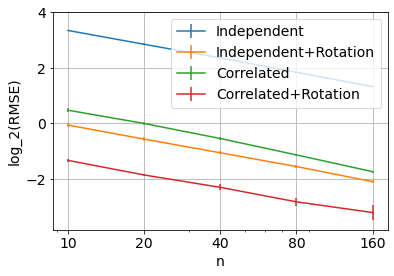}
    \caption{RMSE with random rotation.}
  \end{subfigure}
  \caption{Comparison of compression algorithms on mean estimation task. }
\label{fig:toy_examples}
\end{figure}

\begin{table}[t]
\centering
\caption{Comparison of mean square error of compression algorithms on distributed mean estimation.}
\label{tab:dme-experiments}
\begin{tabular}{lcc}
Algorithm & Synthetic & MNIST \\
\hline
Independent & $10.28 (0.25)$ & $0.466 (0.014)$\\
Independent + Rotation & $3.29 (0.19)$ & $1.661 (0.126)$\\
TernGrad ($\log_{2}(3)$ bits)& $2.64 (0.01)$& $0.621 (0.006)$ \\
Structured DRIVE & $1.38 (0.01)$& $2.165 (0.006)$ \\
\hline
Correlated & $1.40 (0.05)$ & $\mathbf{0.141 (0.004)}$\\
Correlated + Rotation & $\mathbf{1.01 (0.06)}$& $0.238 (0.012)$ \\
\end{tabular}
\end{table}

\begin{table*}[t]
\centering
\caption{Comparison of compression algorithms on a variety of tasks: distributed mean estimation, distributed $k$-means clustering, distributed power iteration, and federated averaging on the MNIST dataset. For all tasks, we set the number of quantization levels to two (one bit), except TernGrad which uses three quantization levels.}
\label{tab:experiments}
\begin{tabular}{lcccc}
Algorithm & Dist. $k$-means & Dist. Power Iteration & FedAvg \\
 & Objective & Error & Accuracy \% \\
\hline
No quantization  & $39.43 (0.12)$ & $0.008 (0.000)$ & $89.31 (0.01)$ \\
\hline
Independent  & $42.98 (0.23)$ & $0.242 (0.005)$ & $87.38 (0.30)$ \\
Independent + Rotation  & $68.85 (2.81)$ & $0.267 (0.016)$ & $89.11 (0.06)$ \\
TernGrad ($\log_{2}(3)$ bits)  & $42.07 (0.09)$ & $0.100 (0.001)$ & $88.50 (0.03)$ \\
Structured DRIVE  & $46.93 (0.17)$ & $0.370 (0.001)$ & $89.00 (0.03)$ \\
\hline
Correlated  & $\mathbf{39.97 (0.13)}$ & $\mathbf{0.055 (0.002)}$ & $88.39 (0.12)$ \\
Correlated + Rotation  & $42.21 (0.17)$ & $0.059 (0.002)$ & $\mathbf{89.20 (0.04)}$ \\
\end{tabular}
\end{table*}

We demonstrate that the proposed algorithm outperforms existing baselines on several distributed tasks.
Before we conduct a full comparison, we first empirically demonstrate that our correlated quantization algorithm is better than existing independent quantization schemes \citep{suresh2017distributed, alistarh2017qsgd}.
We implement all algorithms and experiments using the open-source JAX \citep{jax2018github} and FedJAX \citep{ro2021fedjax} libraries \footnote{\url{https://github.com/google-research/google-research/tree/master/correlated\_compression}}.  \th{We simulated shared randomness by downstream communication from server to clients \citep{szlendak2021permutation}.}

\textbf{Correlated vs independent stochastic quantization.}
We first compare correlated and independent stochastic quantizations on a simple mean estimation task. Let $x_1, x_2,\ldots, x_n$ be $n$ i.i.d.\ samples over $\RR^{1024}$, where coordinate $i$ is sampled independently according to $\mu(i) + U$, where $\mu(i)$ is a uniform random variable between $[0, 1]$ and is fixed for all clients and $U$ is a independent random variable for each client in the range $[-4 \sigmamd, 4 \sigmamd]$.
Note that this distribution has a mean deviation of $\sigmamd$. 
We first fix the number of clients $n$ to be $100$, $k=2$, and vary $\sigmamd$. 
We then fix $\sigmamd = 0.01$, $n=100$ and vary $k$.
Finally, we fix $\sigmamd = 0.01$, $k=2$ and vary $n$.
The results are given in Figures~\ref{fig:toy_examples} $(a), (b), (c)$ respectively. The experiments are averaged over ten runs for statistical consistency.
Observe that in all the experiments, correlated quantization outperforms independent stochastic quantization.

\textbf{Effect of random rotation.}
We next demonstrate that the correlated quantization benefits from random rotation similar to independent quantization. Let $x_1, x_2,\ldots, x_n$ be $n$ i.i.d.\ samples over $\RR^{1024}$, where coordinate $i$ is independently sampled according to $\mu(i) + U$, where $U$ is a independent random variable for each client in the range $[-4 \sigmamd, 4 \sigmamd]$. We let $\mu = (1.0, -1.0, 0.0,\ldots, 0.0)$, $k=2$, and $\sigmamd= 0.01$. 
We compare the results as a function of $n$ in Figure~\ref{fig:toy_examples} $(d)$. Observe that random rotation improves the performance of both correlated and independent quantization. Furthermore, rotated correlated quantization outperforms the remaining schemes.

In the following experiments, we compare our correlated quantization algorithm with several quantization baselines: \emph{Independent}, \emph{Independent+Rotation} \citep{suresh2017distributed}, \emph{TernGrad} \citep{wen2017terngrad}, and \emph{Structured DRIVE} \citep{vargaftik2021drive}.
Since the focus of the paper is quantization, we only compare lossy quantization schemes and do not evaluate the \emph{lossless} compression schemes such as arithmetic coding or Huffman coding, which can be applied on any quantizer.
We use 2-level quantization (one bit) for all the algorithms, except TernGrad which uses $3$ levels and hence requires $\log_2(3)$ bits per coordinate per client. 

\textbf{Distributed mean estimation.}
We next compare our proposed algorithm to existing baselines on a sparse mean estimation task.
Let $x_1, x_2,\ldots, x_n$ be $n$ i.i.d.\ samples over $\RR^{1024}$, where coordinate $i$ is sampled independently according to $\mu(i) + U$, where $\mu$ is a sparse vector with $1\%$ sparse entries and is fixed for all clients and $U$ is a independent random variable for each client in the range $[-4 \sigmamd, 4 \sigmamd]$.
We also compare quantizers on the distributed mean estimation task for the MNIST ($d=784$) dataset distributed over $100$ clients.
The results for both for $10$ repeated trials are in Table~\ref{tab:dme-experiments}.
Observe that correlated quantizers perform best. 

\textbf{Distributed k-means clustering.}
In the distributed Lloyd’s (k-means) algorithm, each client has access to a subset of data points
and the goal of a server is to learn $k$-centers by repeatedly interacting with the clients.
We employ quantizers to reduce the uplink communication cost from clients to server and use the MNIST ($d=784$) dataset and set both the number of centers and number of clients to 10.
We split examples evenly amongst clients and use k-means++ to select the initial cluster centers.
The results for $20$ communication rounds for $10$ repeated trials are in Table~\ref{tab:experiments}.
Observe that correlated quantization performs the best.

\textbf{Distributed power iteration.}
In distributed power iteration, each client has access to a subset of data and the goal of the server is to learn the top eigenvector. Similar to the previous distributed k-means clustering, in the quantized setting, we use quantization to reduce the communication cost
from clients to server and use the MNIST ($d=784$) dataset distributed evenly over $100$ clients
The results for $20$ communication rounds for $10$ repeated trials are in Table~\ref{tab:experiments}.
Observe that  
correlated quantizers outperform all baselines.

\textbf{Federated learning.}
We finally evaluate the effectiveness of the proposed algorithm in reducing the uplink communication costs in federated learning \citep{mcmahan2017communication}. We use the image recognition task for the Federated MNIST dataset \citep{caldas2018leaf} provided by TensorFlow Federated~\citep{tff2019}.
This dataset consists of $341K$ training examples with $10$ label classes distributed across $3383$ clients.
We use quantizers to reduce the uplink communication cost from clients to server and train a logistic regression model for $1000$ communication rounds of federated averaging for $5$ repeated trials.
The results are in Table~\ref{tab:experiments}. 
Observe that correlated quantization with rotation achieves the highest test accuracy.
 
\section{Conclusion}
We proposed a new quantizer for distributed mean estimation and showed that the error guarantee depends on the deviation of data points instead of their absolute range.
We further used this result to provide fast convergence rates for distributed optimization under communication constraints.
Experimental results show that our proposed algorithm outperforms existing compression protocols on several tasks.
We also proved the optimality of the proposed approach under mild assumptions in one dimension.
Extending the lower bounds to high dimensions remains an interesting open question.

\newpage
\bibliographystyle{abbrvnat}
\bibliography{references}

\onecolumn
\appendix

\section{Proof of Theorem~\ref{thm:onedimonebitzq}}
\label{app:onedimonebitzq}
\OneDomOneBitCQ*
\begin{proof}
We first show the result when $l=0$ and $r=1$, one can obtain the final result by rescaling the quantizer operation
\[
(l-u) \cdot Q_i \left( \frac{x_i}{u-l} \right).
\]
Hence in the following, without loss of generality, let $l=0, r=1$. 
 We first show that $Q$ is an unbiased equantizer
 \begin{align*}
    \EE[ Q_i(x_i)]   &= \EE[1_{\frac{\pi_i}{n} + \gamma_i < x_i}].
 \end{align*}
 Observe that $\frac{\pi_i}{n} + \gamma_i$ is a uniform random variable between $[0, 1)$. Hence,
  \begin{align*}
    \EE[ Q_i(x_i)]   &= \EE[1_{\frac{\pi_i}{n} + \gamma_i < x_i}] 
    = \text{Pr}\left(\frac{\pi_i}{n} + \gamma_i < x_i\right) 
    = x_i. 
 \end{align*}
We now bound its variance. To this end let 
\[
y_i = \frac{\lfloor n x_i \rfloor}{n}.
\]
We can rewrite the difference between estimate and the true sum as
\begin{align*}
   \sum^n_{i=1} x_i -  \sum^n_{i=1} Q_i(x_i)
 &   =  \sum^n_{i=1} x_i -  \sum^n_{i=1} y_i 
     +    \sum^n_{i=1} y_i - \sum^n_{i=1} Q_i(y_i) + 
 \sum^n_{i=1} Q_i(y_i) -   \sum^n_{i=1} Q_i(x_i).
\end{align*}
Since $(a+b+c)^2 \leq 3a^2 + 3b^2 + 3c^2$,
\begin{align*}
    \EE \left[\left( \sum^n_{i=1} x_i -  \sum^n_{i=1} Q_i(x_i) \right)^2 \right]
     =& 3\EE \left[\left( \sum^n_{i=1} x_i -  \sum^n_{i=1} y_i \right)^2 \right] \\
     &+  3\EE \left[\left( \sum^n_{i=1} y_i -  \sum^n_{i=1} Q_i(y_i) \right)^2 \right]
    +  3\EE \left[\left( \sum^n_{i=1}  Q_i(y_i) -  \sum^n_{i=1} Q_i(x_i) \right)^2 \right].
\end{align*}
We now bound each of the three terms in the above summation.

\textbf{First term}: Observe that for all $i$
\[
|x_i - y_i| \leq \frac{1}{n}.
\]
Hence,
\[
\left\lvert \sum^n_{i=1}x_i -  \sum^n_{i=1} y_i \right\rvert
\leq \sum^n_{i=1} |x_i - y_i| \leq n \cdot \frac{1}{n} = 1.
\]
Therefore,
\[
\EE \left[\left( \sum^n_{i=1} x_i -  \sum^n_{i=1} y_i \right)^2 \right]  \leq 1.
\]
\textbf{Second term}: To bound the second term,
\begin{align*}
   \EE \left[\left( \sum^n_{i=1} y_i -  \sum^n_{i=1} Q_i(y_i) \right)^2 \right] 
    & =  \sum^n_{i=1}  \EE \left[\left(  y_i -  Q_i(y_i) \right)^2 \right] 
     + \sum^n_{i=1} \sum_{j \neq i} \EE \left[\left(  y_i -  Q_i(y_i) \right) \left(  y_j -  Q_j(y_j) \right)\right] \\
    & = \sum^n_{i=1} y_i (1-y_i)
    + \sum^n_{i=1} \sum_{j \neq i} \EE \left[\left(  y_i -  Q_i(y_i) \right) \left(  y_j -  Q_j(y_j) \right)\right] \\
     & = \sum^n_{i=1} y_i (1-y_i)
    + \sum^n_{i=1} \sum_{j \neq i} \left( \EE \left[ Q_i(y_i)Q_j(y_j) \right] -  y_i y_j\right), 
\end{align*}
where the second equality uses the fact that $1_{\frac{\pi_i}{n} + \gamma_i < y_i}$ is a Bernoulli random variable with parameter $y_i$. We now bound 
$ \EE \left[ Q_i(y_i)Q_j(y_j) \right] -  y_i y_j$ for $i \neq j$. observe that
\[
Q_i(y_i) = 1_{\frac{\pi_i}{n} + \gamma_i < y_i}.
\]
However, since $y_i$ is an integral multiple of $1/n$, and $\gamma_i \in [0, 1/n)$,  $\frac{\pi_i}{n} + \gamma_i < y_i$ if and only if $\frac{\pi_i}{n} < y_i$. Hence,
\[
Q_i(y_i) = 1_{\frac{\pi_i}{n} < y_i}.
\]
Hence,
\begin{align*}
   \EE \left[ Q_i(y_i)Q_j(y_j) \right] 
    = \EE \left[ 1_{\frac{\pi_i}{n} < y_i}  1_{\frac{\pi_j}{n} < y_j} \right] 
    = \text{Pr} \left(\pi_i < n y_i \cap \pi_j < n y_j \right).
\end{align*}
Since $\pi$ is a random permutation,
\begin{align*}
 \text{Pr} \left(\pi_i < n y_i \cap \pi_j < n y_j \right) 
&= \frac{1}{n(n-1)}\min(ny_i, ny_j) \cdot\left( \max(ny_i, ny_j) - 1\right)   \\ 
&= \frac{n y_i y_j}{(n-1)} - \frac{\min(y_i, y_j)}{(n-1)} \\
&= \frac{n y_i y_j}{(n-1)} - \frac{y_i + y_j}{2(n-1)} + \frac{|y_i - y_j|}{2(n-1)}.
\end{align*}
Hence,
\begin{align*}
    \sum^n_{i=1} \sum_{j \neq i} \left( \EE \left[ Q_i(y_i)Q_j(y_j) \right] -  y_i y_j\right) 
    & = \sum^n_{i=1} \sum_{j \neq i} \left( \frac{n y_i y_j}{(n-1)} - \frac{y_i + y_j}{2(n-1)} + \frac{|y_i - y_j|}{2(n-1)} - y_iy_j \right) \\
    & = \frac{1}{n-1} \sum^n_{i=1} \sum_{j \neq i} \left(y_i y_j  - \frac{y_i + y_j}{2} + \frac{|y_i - y_j|}{2}  \right) \\
    & = \frac{1}{n-1} \left( \Bigl(\sum^n_{i=1} y_i\Bigr)^2 -\sum^n_{i=1} y^2_i  - (n-1) \sum^n_{i=1} y_i + \sum^n_{i=1} \sum_{j \neq i} \frac{|y_i - y_j|}{2}  \right)  \\
    & = \sum^n_{i=1} \sum_{j \neq i} \frac{|y_i - y_j|}{2(n-1)} +   \frac{1}{n-1} \left( \Bigl(\sum^n_{i=1} y_i\Bigr)^2 -\sum^n_{i=1} y^2_i\right) - \sum_i y_i \\
      & \leq \sum^n_{i=1}\sum_{j \neq i} \frac{|y_i - y_j|}{2(n-1)} +   \frac{1}{n} \left( \Bigl(\sum^n_{i=1} y_i \Bigr)^2\right) - \sum^n_{i=1} y_i \\
\end{align*}
Hence,
\begin{align*}
 \EE \left[\left( \sum^n_{i=1} y_i -  \sum^n_{i=1} Q_i(y_i) \right)^2 \right]  
    & = 
\sum^n_{i=1} y_i (1-y_i)
    + \sum^n_{i=1} \sum_{j \neq i} \left( \EE \left[ Q_i(y_i)Q_j(y_j) \right] -  y_i y_j\right) \\
    & \leq \sum^n_{i=1} y_i (1-y_i) \zs{+} \sum^n_{i=1}\sum_{j \neq i} \frac{|y_i - y_j|}{2(n-1)} +   \frac{1}{n} \left( (\sum_i y_i)^2\right) - \sum_i y_i \\
    & = \sum^n_{i=1}\sum_{j \neq i} \frac{|y_i - y_j|}{2(n-1)} + \frac{1}{n} \left( \Bigl(\sum^n_{i=1} y_i\Bigr)^2\right)  - \sum^n_{i=1} y^2_i \\
    & \leq  \sum^n_{i=1}\sum_{j \neq i} \frac{|y_i - y_j|}{2(n-1)} \\
      & \leq  \sum^n_{i=1}\sum_{j \neq i} \frac{|x_i - x_j|}{2(n-1)} + \sum^n_{i=1}\sum_{j \neq i} \frac{1}{2n(n-1)} \\  & \leq  \sum^n_{i=1}\sum_{j \neq i} \frac{|x_i - x_j|}{2(n-1)} +1 .
\end{align*}
\textbf{Third term}: Observe that 
\[
Q_i(y_i) - Q_i(x_i) =  1_{\frac{\pi_i}{n} + \gamma_i < y_i} - 1_{\frac{\pi_i}{n} + \gamma_i < x_i}.
\]
Since $x_i \geq y_i$, $1_{\frac{\pi_i}{n} + \gamma_i < y_i}$ implies $1_{\frac{\pi_i}{n} + \gamma_i < x_i}$. Hence $Q_i(y_i) - Q_i(x_i)$ takes at most two values $0, -1$. Therefore,
\[
Q_i(y_i) - Q_i(x_i)  = -1_{y_i \leq \frac{\pi_i}{n} + \gamma_i <x_i}.
\]
Therefore,
\begin{align*}
&    \EE \left[\left( \sum^n_{i=1}  Q_i(y_i) -  \sum^n_{i=1} Q_i(x_i) \right)^2 \right] \\
    & =\sum^n_{i=1}  \EE \left[\left(  Q_i(y_i) -   Q_i(x_i) \right)^2 \right]  + \sum^n_{i=1} \sum_{j \neq i}   \EE \left[\left( Q_i(y_i) - \zs{Q_i(x_i) } \right) \left( Q_j(y_j) -  Q_j(x_j) \right) \right] \\
    & = \sum^n_{i=1}  \EE \left[1_{y_i \leq \frac{\pi_i}{n} + \gamma_i <x_i} \right]   + \sum^n_{i=1} \sum_{j \neq i}   \EE \left[1_{y_i \leq \frac{\pi_i}{n} + \gamma_i <x_i} 1_{y_j \leq \frac{\pi_j}{n} + \gamma_j <x_j} \right] \\
       & = \sum^n_{i=1}  \EE \left[1_{y_i \leq \frac{\pi_i}{n} + \gamma_i <y_i+1} \right]   + \sum^n_{i=1} \sum_{j \neq i}   \EE \left[1_{y_i \leq \frac{\pi_i}{n} + \gamma_i <y_i + 1} 1_{y_j \leq \frac{\pi_j}{n} + \gamma_j <y_j + 1} \right] \\
        & = \sum^n_{i=1}  \EE \left[1_{y_i \leq \frac{\pi_i}{n} <y_i+1} \right]   + \sum^n_{i=1} \sum_{j \neq i}   \EE \left[1_{y_i \leq \frac{\pi_i}{n}  <y_i + 1} 1_{y_j \leq \frac{\pi_j}{n}  <y_j + 1} \right] \\
        & \leq \sum^n_{i=1} \frac{1}{n} + \sum^n_{i=1} \sum_{j \neq i} \frac{1}{n(n-1)} \\
        & = 2.
\end{align*}
Combining the analysis of the above three terms we get
\[
\EE \left[\left( \sum^n_{i=1} x_i -  \sum^n_{i=1} Q_i(x_i) \right)^2 \right]
 \leq \sum^n_{i=1}\sum_{j \neq i} \frac{3|x_i - x_j|}{2(n-1)} + 12.
\]
Hence,
\begin{align*}
    \cE(Q^n, x^n) 
    & \leq \sum^n_{i=1}\sum_{j \neq i} \frac{3|x_i - x_j|}{2n^2(n-1)} + \frac{12}{n^2} \\
    & \leq \frac{3}{2n} \cdot \frac{1}{n(n-1)}\sum^n_{i=1}\sum_{j \neq i} |x_i - x_j| + \frac{12}{n^2} .
\end{align*}
The lemma follows by observing that 
\[
\frac{1}{n(n-1)}\sum^n_{i=1}\sum_{j \neq i} |x_i - x_j| \leq 2\sigmamd.
\]
\end{proof}

\section{Proof of Theorem~\ref{thm:onedimkbitzq}}
\label{app:onedimkbitzq}

\OneDimKLevelsCQ*
\begin{proof}
Similar to the proof of Theorem~\ref{thm:onedimonebitzq}, without loss of generality, we let $l=0, r=1$. 
One can obtain the final result by rescaling the quantizer via
\[
(l-u) \cdot Q_i \left( \frac{x_i}{u-l} \right).
\]
The proof of unbiasedness is similar to the proof of Theorem~\ref{thm:onedimonebitzq} and is omitted.  We now bound its variance.
Let $z_i = \frac{x_i - c'_i}{\beta} $. We wish to bound
\[
\sum^n_{i=1} x_i -  \sum^n_{i=1} Q_i(x_i)
= \beta \left( \sum^n_{i=1}  z_i -  \sum^n_{i=1}  Q_i \left( z_i\right) \right).
\] 
Similar to the proof of Theorem~\ref{thm:onedimonebitzq}, it can be shown that
\[
\EE \left[\left( \sum^n_{i=1} z_i -  \sum^n_{i=1} Q_i(z_i) \right)^2 \vert c' \right]
 \leq \sum^n_{i=1}\sum_{j \neq i} \frac{3|z_i - z_j|}{2(n-1)} + 12.
\]
Hence,
\begin{align*}
\EE \left[\left( \sum^n_{i=1} x_i -  \sum^n_{i=1} Q_i(x_i) \right)^2 \vert c' \right]
  & \leq \beta^2 \sum^n_{i=1}\sum_{j \neq i} \frac{3|z_i - z_j|}{2(n-1)} + 12 \beta^2.
\end{align*}
If $|x_i - x_j| \geq \beta$, then 
\[
 |z_i - z_j| \leq 1.
\]
If $|x_i - x_j| \leq \beta$,
then
\begin{align*}
    \EE[ |z_i - z_j|]
    & = \text{Pr}(c'_i = c'_j) \EE[ |z_i - z_j|1_{c'_i = c'_j}] +  \text{Pr}(c'_i \neq c'_j)  \EE[ |z_i - z_j|1_{c'_i \neq c'_j}]\\
     & \leq  \text{Pr}(c'_i = c'_j) \EE[ |z_i - z_j|1_{c'_i = c'_j}] +  \text{Pr}(c'_i \neq c'_j) \\
     & \leq   \text{Pr}(c'_i = c'_j) \frac{|x_i - x_j|}{\beta} +  \text{Pr}(c'_i \neq c'_j) \\
         & \leq  \frac{|x_i - x_j|}{\beta} +  \text{Pr}(c'_i \neq c'_j) \\
      & \leq  \frac{|x_i - x_j|}{\beta} +  \frac{|x_i - x_j|}{\beta}\ = 2\frac{|x_i - x_j|}{\beta} .
\end{align*}
Hence,
\[
   \EE[ |z_i - z_j|]  \leq 2 \left( \min\left(\frac{|x_i - x_j|}{\beta}, 1\right) \right).
\]
Therefore,
\begin{align*}
 \EE \left[\left( \sum^n_{i=1} x_i -  \sum^n_{i=1} Q_i(x_i) \right)^2 \right]
  & \leq \sum^n_{i=1}\sum_{j \neq i}\beta  \frac{3\min \left(|x_i - x_j|, \beta\right)}{(n-1)} + 12 \beta^2 \\
  & \leq 3\min\left( \sum^n_{i=1}\sum_{j \neq i} \frac{\beta |x_i - x_j|}{(n-1)} , n \beta^2 \right) + 12 \beta^2 \\
    & \leq 3\min\left( 2n\beta\sigmamd , n \beta^2 \right) + 12 \beta^2.
\end{align*}
\begin{align*}
    \cE(Q^n, x^n) 
    & \leq \frac{3}{n} \cdot \min \left(  2 \beta \sigmamd , \beta^2 \right) + \frac{12\beta^2}{n^2} .
\end{align*}
Combining with the fact that $\beta \leq 2/k$ yields the result.
\end{proof}

\section{Proof of Corollary~\ref{cor:walshcq}}
\label{app:walshcq}

\WalshHadamardCQ*
\begin{proof}
The communication cost follows by observing that we are communication $k$ level of quantization for each coordinate. 
We first bound the bias. Observe that
\begin{align*}
    \| \mean - \EE[\hat{x}]\|_2^{2} 
    & \leq \frac{1}{n}  \sum^n_{i=1} \left \lVert x_i - \frac{R \sqrt{8\log (dn)}}{\sqrt{d}} \EE[W^{-1} \cdot z_i] \right \rVert_2^{2}   \\
    &\zs{ = \frac{1}{n}  \sum^n_{i=1} \left \lVert  \EE\left[W^{-1} \cdot \left( W \cdot x_i  - \frac{R \sqrt{8\log (dn)}}{\sqrt{d}} z_i \right)\right]\right \rVert_2^{2}  } \\
    &\zs{ \le  \frac{1}{n}  \sum^n_{i=1} \EE\left[\left \lVert  W^{-1} \cdot \left( W \cdot x_i  - \frac{R \sqrt{8\log (dn)}}{\sqrt{d}} z_i \right)\right \rVert_2^{2}  \right]} \\
    &\zs{ \le  \frac{1}{n}  \sum^n_{i=1} \EE\left[\left \lVert W \cdot x_i  - \frac{R \sqrt{8\log (dn)}}{\sqrt{d}} z_i \right \rVert_2^{2}  \right]} \\
    & \le  \frac{1}{n}  \sum^n_{i=1} 18 R^2 \log (dn)  \Pr\left(\frac{\sqrt{d}}{R \sqrt{8\log (dn)}}W \cdot x_i \neq z_i\right),
\end{align*}
where the last inequality follows from that when $\frac{\sqrt{d}}{R \sqrt{8\log (dn)}} W\cdot x_i \neq z_i$, the error is at most 
\[
    \left\lVert W\cdot x_i -  \frac{R \sqrt{8\log (dn)}}{\sqrt{d}}  z_i \right \rVert_2^2 \le  18 R^2 \log (dn).
\]
Next we bound  $\Pr\left(\frac{\sqrt{d}}{R \sqrt{8\log (dn)}}W \cdot x_i \neq z_i\right) $ for each $x_i$. For any $x$ with $\lVert x\rVert \le R$,

\begin{align*}
    \Pr\left(\frac{\sqrt{d}}{R \sqrt{8\log (dn)}}W \cdot x \neq z\right) & \le \sum_{j = 1}^d \Pr\left(\frac{\sqrt{d}}{R \sqrt{8\log (dn)}}[W x](j) \neq z(j)\right)\\
    & \le \sum_{j = 1}^d \Pr\left([Wx](j) \geq \frac{ \sqrt{8\log (dn))}}{\sqrt{d}} \|x\|_2 \right) \\
    & \stackrel{(b)}{\leq}d e^{-4 \log(dn)},
\end{align*}
where $(b)$ follows from McDiamid's inequality. Hence,
\begin{align*}
     \| \mean - \EE[\hat{x}]\|_2^2  \leq  \frac{18R^2\log(dn)}{d^3 n^4}.
\end{align*}
To bound the mean squared error, observe that
\begin{align*}
    \EE \| \mean - \hat{x}]\|^2_2 & = \EE \| \mean - \EE[\hat{x}]]\|^2_2 + \EE \| \hat{x}- \EE[\hat{x}]]\|^2_2 \\
    & =  \| \mean - \EE[\hat{x}]]\|^2_2 + \EE \| \hat{x}- \EE[\hat{x}]]\|^2_2 \\
    & \leq \| \mean - \EE[\hat{x}]]\|^2_2 + \EE \| W\hat{x}- \EE[W\hat{x}]]\|^2_2 \\
    & \leq \frac{18R^2\log(dn)}{d^3 n^4}+ \EE \| W\hat{x}- \EE[W\hat{x}]]\|^2_2.
    \end{align*}
For $j \in [d]$, let $y^n(j) = (y_1(j), \ldots, y_n(j))$ denote the scalar dataset which only consists of the $j$th coordinate of $y^n$. Define    $\sigmamd(y^n(j))$ to be the corresponding empirical mean absolute deviation.
     \begin{align*}
  \EE \| W\hat{x}- \EE[W\hat{x}]]\|^2_2   & =  \frac{R^2 (8 \log (dn))}{d} \EE \| z- \EE[z]]\|^2_2 \\
     & \leq  \frac{R^2 (8 \log (dn))}{d} \left( \frac{12}{n} \cdot \sum^d_{j=1}\min \left(  \frac{\EE[\sigmamd(y^{n\prime}(j))]}{k}, \frac{1}{k^2} \right)
  + \frac{48d}{n^2k^2} \right) \\
    & \leq  \frac{R^2 (8 \log (dn))}{d} \left( \frac{12}{n} \cdot \sum^d_{j=1}\min \left(  \frac{2\EE[\sigmamd(y^n(j))]}{k}, \frac{1}{k^2} \right)
  + \frac{48d}{n^2k^2} \right) \\
     & \leq  \frac{R^2 (8 \log (dn))}{d} \left( \frac{12}{n} \cdot \min \left(  \frac{2\sum^d_{j=1}\EE[\sigmamd(y^n(j))]}{k}, \frac{d}{k^2} \right)
  + \frac{48d}{n^2k^2} \right) \\
  & \leq  \frac{R^2 (8 \log (dn))}{d} \left( \frac{12}{n} \cdot \min \left(  \frac{2\sqrt{d}\cdot \EE[\sigmadmd(y^n)]}{k}, \frac{d}{k^2} \right)
  + \frac{48d}{n^2k^2} \right) \\
    & \stackrel{(c)}{\leq}  (8 \log (dn)) \left( \frac{12}{n} \cdot \min \left(  \frac{2R\sigmadmd(x^n)}{k}, \frac{R^2}{k^2} \right)
  + \frac{48R^2}{n^2k^2} \right) \\
    & \leq   \frac{192\log (dn)}{n} \cdot \min \left(  \frac{R\sigmadmd(x^n)}{k}, \frac{R^2}{k^2} \right)
  + \frac{392R^2\log (dn)}{n^2k^2},
 \end{align*}
where $(c)$ uses the fact that $ \sigmadmd(y^n) = \frac{\sqrt{d}}{R\sqrt{8 \log (dn)}}\sigmadmd(x^n)$ since $y^n$ is a scaled rotated version of $x^n$.
Combining the above two results yields the result. 
\end{proof}

\section{Proof of Lower bounds (Theorem~\ref{thm:lower-onebit} and Theorem~\ref{thm:lower-klevel})} \label{app:lower-proof}
    A general $k$-level randomized quantizer $Q^n$ can be viewed as a distribution over $\cQ_{\text{det}}$, which contains all deterministic mappings from $[l, r]^n$ to $[k]^n$. \zs{Let $\cQ_{\text{rand}}$ be all randomized mappings from $[l, r]^n$ to $[k]^n$.} To establish a lower bound for general quantizer, we use Yao's minimax principle to reduce it to a lower bound over $\cQ_{\text{det}}$,
    \begin{align} \label{eqn:minmax}
         \min_{Q^n \zs{\in \cQ_{\text{rand}}}}  \max_{x^n} \cE(Q^n, x^n) \ge \max_{P} \min_{Q^n \in \cQ_{\text{det}}}\EE_{X^n \sim P}   \left[ \cE(Q^n, X^n)\right]  =  \max_{P} \min_{Q^n \in \cQ_{\text{det}}} \EE_{X^n \sim P} \left[
        \left(\est(Q^n(X^n))- \frac1n \sum_{i = 1}^n X_i\right)^2 \right].
    \end{align}
In the rest of the proof, we focus on deterministic $Q_i$'s and bound~\eqref{eqn:minmax}. Without loss of generality we assume $l = 0$.  We first prove the following general lemma, which will be useful to prove both lower bounds.
\begin{lemma} \label{lem:mse} Suppose under distribution $P$ over $[0, r]^n$,  $X_i$'s are sampled i.i.d from $p$ over $[0, r]$, for any deterministic quantizer $Q^n$, we have
    \begin{align*}
     \EE_{X^n \sim P} \left[
        \left(\est(Q^n(X^n))- \frac{1}{n} \sum^n_{i=1} X_i \right)^2 \right]
         \ge \frac{1}{n^2} \sum_{i = 1}^n \EE_{X_i \sim p} \left[ \left(X_i - \EE \left[ X_i \mid Q_i(X_i)\right]\right)^2 \right].
\end{align*}
\end{lemma}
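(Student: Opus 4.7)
Write $\hat{x} = \est(Q^n(X^n))$ and $\bar{X} = \frac{1}{n}\sum_{i=1}^n X_i$. My plan is to exploit two structural facts: (i) the estimator is measurable with respect to $Q^n(X^n)$, so the MSE is bounded below by the conditional variance of $\bar X$ given $Q^n(X^n)$; and (ii) since the $X_i$ are independent and each $Q_i$ depends only on $X_i$, the conditional variance of $\bar X$ decomposes as a sum over coordinates of the one-client residual variance. Combining these two observations gives the stated lower bound.

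\paragraph{Step 1: Reduce to conditional variance.} Since $\hat{x}$ is a deterministic function of $Q^n(X^n)$, the standard orthogonality property of conditional expectation gives the decomposition
\[
\EE\!\left[(\hat{x}-\bar X)^2\right] = \EE\!\left[(\hat{x}-\EE[\bar X \mid Q^n(X^n)])^2\right] + \EE\!\left[\mathrm{Var}(\bar X\mid Q^n(X^n))\right],
\]
where the cross term vanishes because $\hat{x}-\EE[\bar X\mid Q^n(X^n)]$ is $\sigma(Q^n(X^n))$-measurable. Dropping the nonnegative first term yields
\[
\EE\!\left[(\hat{x}-\bar X)^2\right] \ \ge\ \EE\!\left[\mathrm{Var}(\bar X\mid Q^n(X^n))\right].
\]
This step is routine; it just says the conditional mean $\EE[\bar X\mid Q^n(X^n)]$ is the best estimator in mean square among all functions of $Q^n(X^n)$.

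\paragraph{Step 2: Decompose the conditional variance along coordinates.} The key observation is that because $X_1,\ldots,X_n$ are independent and each $Q_j$ is a deterministic function of $X_j$ alone, the tuple $(Q_j(X_j))_{j\ne i}$ is independent of $X_i$. Hence
\[
\EE[X_i\mid Q^n(X^n)] \ =\ \EE[X_i\mid Q_i(X_i)],
\]
and more strongly, conditional on $Q^n(X^n)$, the variables $X_1,\ldots,X_n$ remain mutually independent (each $X_i$ is only informed by its own message $Q_i(X_i)$). Therefore
\[
\mathrm{Var}(\bar X\mid Q^n(X^n)) \ =\ \frac{1}{n^2}\sum_{i=1}^n \mathrm{Var}(X_i\mid Q^n(X^n))\ =\ \frac{1}{n^2}\sum_{i=1}^n \mathrm{Var}(X_i\mid Q_i(X_i)).
\]

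\paragraph{Step 3: Conclude.} Taking expectations and using that $\mathrm{Var}(X_i\mid Q_i(X_i)) = \EE[(X_i-\EE[X_i\mid Q_i(X_i)])^2 \mid Q_i(X_i)]$ yields
\[
\EE\!\left[(\hat x-\bar X)^2\right] \ \ge\ \frac{1}{n^2}\sum_{i=1}^n \EE\!\left[(X_i-\EE[X_i\mid Q_i(X_i)])^2\right],
\]
which is exactly the claimed bound. The only substantive step is the conditional-independence observation in Step 2; everything else is bookkeeping. I do not anticipate an obstacle, but care must be taken to note that the argument uses that $Q^n$ is a product quantizer with no shared randomness, which is guaranteed by the reduction to $\cQ_{\text{det}}$ and by the fact that each client only sees its own input.
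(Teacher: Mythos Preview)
Your proposal is correct and follows essentially the same route as the paper: first replace $\hat x$ by the Bayes-optimal estimator $\EE[\bar X\mid Q^n(X^n)]$ to get a lower bound, then use independence of the $X_i$ (and the product structure of $Q^n$) to reduce $\EE[X_i\mid Q^n(X^n)]$ to $\EE[X_i\mid Q_i(X_i)]$ and make the cross terms vanish. Your Step~2 spells out the conditional-independence justification for the variance decomposition a bit more explicitly than the paper, but the argument is the same.
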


\begin{proof}
    Since conditional expectation minimizes the mean squared error over all functions of $\est(Q^n(X^n))$, we have
\begin{align*}
    \EE_{X^n \sim P} \left[
        \left(\est(Q^n(X^n))- \frac{1}{n} \sum^n_{i=1} X_i \right)^2 \right] \ge \EE_{X^n \sim P} \left[
        \left(\EE \left[\frac{1}{n} \sum_{i = 1}^n X_i \mid Q^n(X^n)\right] - \frac{1}{n} \sum^n_{i=1} X_i\right)^2 \right].
\end{align*}
 
Note that since $X_i$'s are independent and $Q^n$ is deterministic, we have all $Q_i(X_i)$ are independent. Moreover, this implies $Q_j(X_j)$ is independent of $X_i$ for $i \neq j$. Hence
\begin{align*}
    \EE \left[\frac{1}{n} \sum_{i = 1}^n X_i \mid Q^n(X^n)\right] = \frac{1}{n} \sum_{i = 1}^n \EE \left[ X_i \mid Q_i(X_i)\right].
\end{align*}
Combining these, we get
\begin{align*}
    \EE_{X^n \sim P} \left[
        \left(\est(Q^n(X^n))- \frac{1}{n} \sum^n_{i=1} X_i\right)^2 \right] \ge \frac{1}{n^2} \sum_{i = 1}^n \EE_{X_i \sim p} \left[ \left(X_i - \EE \left[ X_i \mid Q_i(X_i)\right]\right)^2 \right].
\end{align*}
\end{proof}
Next we use Lemma~\ref{lem:mse} to prove Theorems~\ref{thm:lower-onebit} and~\ref{thm:lower-klevel}.

\OneDimOneBitLower*
\begin{proof}
Consider the following distribution $p$ over $[0,r]$ where 
\[
    p\left(x\right) = \begin{cases}
    \frac{\sigmamd}{2r} & \text{ if } x = 0, \\
    \frac{\sigmamd}{2r} & \text{ if } x = r,\\
    1 - \frac{\sigmamd}{r} & \text{ if } x = \frac{r}{2}, \\
    0 & \text{ otherwise.}
    \end{cases}
\]
We first show that if $P$ is the distribution where $X^n$ is sampled i.i.d from $p$,
\[
    \EE_{X^n \sim P} \left[
        \left(\est(Q^n(X^n))- \frac1n \sum_{i = 1}^n X_i\right)^2 \right] \ge \frac{\sigmamd \cdot r}{32n}.
\]
Applying Lemma~\ref{lem:mse}, it is enough to show that for any $Q_i$, we have
\[
    \EE_{X_i \sim p} \left[ \left(X_i - \EE \left[ X_i \mid Q_i(X_i)\right]\right)^2 \right] \ge  \frac{\sigmamd \cdot r}{32}.
\]

Since $Q_i$ only outputs 1 bit, there must exist two numbers in $\{0, r/2, r\}$ such that they are mapped to the same output. \th{If $\{0, r/2\}$ are mapped to the same output, we have  $\EE \left[ X_i \mid Q_i(X_i) = Q_i(0)\right] = \left( \frac{1 - \frac{\sigmamd}{r}}{1 - \frac{\sigmamd}{2r}} \right) \frac{r}{2} \geq \frac{r}{4}$. Hence,
\begin{align*}
     \EE_{X_i \sim p} \left[ \left(X_i - \EE \left[ X_i \mid Q_i(X_i)\right]\right)^2 \right] 
     &   \ge \text{Pr}\left(X_i = 0\right)  \cdot  \left( \EE \left[ X_i \mid Q_i(X_i) = Q_i(0)\right]\right)^2   \\
 & \ge  \frac{\sigmamd}{2r} \left(\frac{r}{4} \right)^2 = \frac{\sigmamd \cdot r}{32}.
\end{align*}
}
The same result holds when $\{r/2, r\}$ are mapped to the same output. When  $\{0, r\}$ are mapped to the same output, we have
\begin{align*}
     \EE_{X_i \sim p} \left[ \left(X_i - \EE \left[ X_i \mid Q_i(X_i)\right]\right)^2 \right] 
 &   \ge \text{Pr}\left(X_i = 0 \text{ or } r\right) \cdot   \EE_{X_i\mid Q_i(X_i) = Q_i(0) } 
    \left[ \left(X_i - \EE \left[ X_i \mid Q_i(X_i) = Q_i(0)\right]\right)^2  \right] \\
 & \ge \frac{\sigmamd}{r} \times \left(\frac{r}{2} \right)^2 = \frac{\sigmamd \cdot r}{4}.
\end{align*}
The last step is to check the mean absolute deviation. The proof is not complete yet since not all samples $X^n$ from the distribution $P$ has mean deviation $O(\sigmamd)$. Next we construct a distribution $P'$ where all $X^n \sim P'$ has mean deviation $O(\sigmamd)$ while the quantity in \eqref{eqn:minmax} is still lower bounded by $\Omega(\sigmamd \cdot r)$. We start by noting that the mean deviation can be upper bounded as
\begin{align*} \label{eqn:avg_diff}
    \frac{1}{n}\sum_{i = 1}^n |X_i - \frac{1}{n}\sum_{i = 1}^n X_i| \le \frac{1}{n^2}\sum_{i, j \in[n]} |X_i - X_j| = \frac{1}{n^2} \frac{r}{2}\left( N_0 N_{\frac{r}2} + N_r N_{\frac{r}2} + 2N_0 N_r\right),
\end{align*}
where $N_0, N_{\frac{r}{2}}, N_r$ denote the number of appearances of $0, \frac{r}{2}$ and $r$ respectively. When $N_{0} + N_r < \frac{4 n \sigmamd}{r}$, we have
\[
    \frac{1}{n^2} \frac{r}{2}\left( N_0 N_{\frac{r}2} + N_r N_{\frac{r}2} + 2N_0 N_r\right) \le  \frac{1}{n^2} \frac{r}{2} \left(n \cdot \frac{4n \sigmamd}{r} + \left(\frac{2n \sigmamd}{r} \right)^2 \right) \le 4\sigmamd.
\]
Since $N_{0} + N_r$ is a binomial random variable with $n$ trials and success probability $\frac{\sigmamd}{r}$, by Chernoff bound, we have
\[
    \text{Pr} \left( N_{0} + N_r \ge \frac{4 n \sigmamd}{r} \right) \le \exp \left( - \frac{2n\sigmamd}{r}\right) \le \left(\frac{\sigmamd}{r}\right)^{16}.
\]
Let $P'$ be the distribution of $X^n$ conditioned on the event that $N_{0} + N_r < \frac{4 n \sigmamd}{r}$, we have
\begin{align*}
    \EE_{X^n \sim P} \left[
        \left(\est(Q^n(X^n))- \frac1n \sum_{i = 1}^n X_i\right)^2 \right] & \le \text{Pr} \left( N_{0} + N_r < \frac{4 n \sigmamd}{r} \right)  \EE_{X^n \sim P'} \left[
        \left(\est(Q^n(X^n))- \frac1n \sum_{i = 1}^n X_i\right)^2 \right] \\
        & \;\;\;\;\;\; \;\;\;\;\;\;\;\;\;\;\;\; + r^2 \text{Pr} \left( N_{0} + N_r \ge \frac{4 n \sigmamd}{r} \right),
\end{align*}
since when $N_{0} + N_r \ge \frac{4 n \sigmamd}{r}$, the error is bounded by at most $r^2$ since the \th{the optimal} mean lies in $[0,r]$. Hence we have
\begin{align*}
    \EE_{X^n \sim P'} \left[
        \left(\est(Q^n(X^n))- \frac1n \sum_{i = 1}^n X_i\right)^2 \right] & \ge \EE_{X^n \sim P} \left[
        \left(\est(Q^n(X^n))- \frac1n \sum_{i = 1}^n X_i\right)^2 \right]  - r^2 \text{Pr} \left( N_{0} + N_r \ge \frac{4 n \sigmamd}{r} \right) \\
        & \ge \frac{\sigmamd \cdot r}{64n}.
\end{align*}
This completes the proof using~\eqref{eqn:minmax}.

\end{proof}

\OneDimKBitLower*
\begin{proof}
We show that 
\[
 \cE(Q^n, x^n) \ge \max \left( \frac{\sigmamd(r - l)}{32 n k} +  \frac{(r - l)^2}{64 n^2 k^2} \right).
\]
The lemma follows by the fact that maximum is lower bounded by the average. Note that here it would be enough to prove the two terms in the above lower bound separately.
Assuming $l = 0$, for the $\frac{\sigmamd \cdot r}{32 n k}$ term, we assume $n =\Omega( \frac{r}{k \sigmamd})$ since otherwise the second term dominates. We consider $2 k$ distributions $p_1, p_2, \ldots, p_{2k}$ where for $j \in \{1, \ldots, 2k\}$,
\[
    p_j\left(x\right) = \begin{cases}
    \frac{k \sigmamd}{r} & \text{ if } x = (j-1)  \cdot \frac{r}{2k}, \\
    1 - \frac{k \sigmamd}{r} & \text{ if } x = j \cdot\frac{r}{2k}, \\
    0 & \text{ otherwise.}
    \end{cases}
\]
Then we define $\bar{P}$ over $[0, r]^n$ as a uniform mixture of $b$ distributions,
\[
    \bar{P} = \frac{1}{2k} \sum_{j = 1}^{2k} P_j.
\]
where under $P_j$, $X_i$'s are $n$ i.i.d samples from $p_j$. Then we have for any deterministic quantization scheme $Q^n$,
\[
 \EE_{X^n \sim \bar{P}} \left[
        \left(\est(Q^n(X^n))- \frac1n\sum_{i = 1}^n X_i \right)^2 \right] = \frac{1}{2k} \sum_{j = 1}^{2k} \EE_{X^n \sim P_{j}} \left[
        \left(\est(Q^n(X^n))- \frac1n\sum_{i = 1}^n X_i \right)^2 \right] 
\]
Note that each $P_j$ is a product distribution, hence applying Lemma~\ref{lem:mse}, we have
\begin{align*}
    \EE_{X^n \sim \bar{P}} \left[
        \left(\est(Q^n(X^n))- \frac1n\sum_{i = 1}^n X_i \right)^2 \right] & \ge \frac{1}{2k} \sum_{j = 1}^{2k} \frac{1}{n^2} \sum_{i = 1}^n \EE_{X_i \sim p_j} \left[ \left(X_i - \EE \left[ X_i \mid Q_i(X_i)\right]\right)^2 \right] \\
        & = \frac{1}{n^2} \sum_{i = 1}^n   \left( \frac{1}{2k} \sum_{j = 1}^{2k}  \EE_{X_i \sim p_j} \left[ \left(X_i - \EE \left[ X_i \mid Q_i(X_i)\right]\right)^2 \right] \right).
\end{align*}
Hence it is enough to show that for any \monotone quantization scheme, we have $\forall i \in [n]$,
\[
    \frac{1}{2k} \sum_{j = 1}^{2k}  \EE_{X_i \sim p_j} \left[ \left(X_i - \EE \left[ X_i \mid Q_i(X_i)\right]\right)^2 \right] \ge \frac{\sigmamd \cdot r}{32 k}.
\]
A \monotone quantizer $Q_i$ only partitions the interval into $k$ intervals. Hence among the $2k - 1$ pairs of consecutive points $(j \cdot \frac{r}{2k}, (j + 1) \cdot \frac{r}{2k}), j \in \{1, \ldots, 2k-1\}$,  there exists a set $S$ of indices with $ |S| \ge k - 1$ that satisfies $\forall j \in S, Q_i(j \cdot \frac{r}{2k}) = Q_i((j+1) \cdot \frac{r}{2k})$. For these $j$'s, $p_j$ is supported on $\{ j \cdot \frac{r}{2k},(j+1) \cdot \frac{r}{2k} \}$. Hence $Q_i(X_i)$ doesn't bring any information about $X_i$. Hence we have
\[
    \frac{1}{2k} \sum_{j = 1}^{2k}  \EE_{X_i \sim p_j} \left[ \left(X_i - \EE \left[ X_i \mid Q_i(X_i)\right]\right)^2 \right]  \ge \frac{1}{2k} \sum_{j \in S} \text{Var}_{X_i \sim p_j} (X_i) \ge \frac{\sigmamd \cdot r}{32 k}.
\]
Similar to the proof of Theorem~\ref{thm:lower-onebit}, we handle the requirement on mean absolute deviation, which we omit here for brevity. 

For the $\frac{r^2}{64 n^2 k^2}$ term, we consider the following $2 nk$ datasets where for the $j$th dataset, we have $\forall i \in [n]$,
\[ 
    x_i = v_j = (j - 1)\cdot \frac{r}{2 n k}.
\]
We abuse notation to denote the $j$th dataset as $(v_j)^n$. Consider distribution $P$ which is uniform over all $2nk$ datasets. Then we have for any deterministic $Q^n$,
\[
    \EE_{X^n \sim P} \left[
        \left(\est(Q^n(X^n))- \frac1n \sum_{i = 1}^n X_i\right)^2 \right] = \frac{1}{2nk} \sum_{j= 1}^{2nk} \left(\est(Q^n((v_j)^n))- v_j \right)^2.
\]  
Note that there are $n$ users, each with a \monotone quantizer.  Similar to the previous construction, among the $2nk$ possible values, there must exists at least $nk - 1$ disjoint pairs $\{j_1, j_2\}$ such that 
\[
    Q^n((v_{j_1})^n) = Q^n((v_{j_2})^n).
\]
For each of these pairs,  
\[
\left(\est(Q^n((v_{j_1})^n))- v_{j_1} \right)^2 + \left(\est(Q^n((v_{j_2})^n))- v_{j_2} \right)^2 \ge \frac{1}{4} (v_{j_1}  - v_{j_2} )^2 \ge \frac{r^2}{16n^2 k^2}.
\]
Hence we have 
\[
     \frac{1}{2nk} \sum_{j= 1}^{2nk} \left(\est(Q^n((v_j)^n))- v_j \right)^2 \ge \frac{nk - 1}{2nk} \cdot \frac{r^2}{16n^2 k^2} \ge  \frac{r^2}{64n^2 k^2}.
\]
\end{proof}

\end{document}